\theoremstyle{plain}  
\newtheorem{theorem}{Theorem}[section]
\newtheorem{lemma}[theorem]{Lemma}
\newtheorem{corollary}[theorem]{Corollary}
\newtheorem{proposition}[theorem]{Proposition}
\theoremstyle{definition}
\theoremstyle{remark} 
\newtheorem{remark}{Remark}[section]
\newtheorem{definition}{Definition}
\newcommand{\poo}{p_{00}}
\newcommand{\qio}{q_{10}}
\newcommand{\poi}{p_{01}}
\newcommand{\pio}{p_{10}}
\newcommand{\pii}{p_{11}}
\newcommand{\qoo}{q_{00}}
\newcommand{\qoi}{q_{01}}
\newcommand{\qii}{q_{11}}
\newcommand{\uoo}{u^{00}}
\newcommand{\uoi}{u^{01}}
\newcommand{\uio}{u^{10}}
\newcommand{\uii}{u^{11}}
\newcommand{\p}{\mathbb{P}}
\newcommand{\E}{\mathbb{E}}
\long\def\gh#1{{\color{black}#1}}
\long\def\ghh#1{{\color{black}#1}}
\def\0{{\bf 0}}
\def\1{{\bf 1}}
\def \bmat{\left[\begin{matrix}}
	\def \emat{\end{matrix}\right]}
\def \xy1vec{\left[\begin{matrix}x\\y\\1\end{matrix}\right]}
\def \QED{\begin{flushright}\Halmos\end{flushright}\end{proof}}
\long\def\old#1{}
\definecolor{DarkerGreen}{RGB}{0,170,0}
\definecolor{orange}{rgb}{1,0.5,0}
\renewcommand\and{\end{tabular}\kern-\tabcolsep\ and\ \kern-\tabcolsep\begin{tabular}[t]{c}}
\let\origthanks\thanks
\renewcommand\thanks[1]{\begingroup\let\rlap\relax\origthanks{#1}\endgroup}
\title{\LARGE \bf Partial Recovery in the Graph Alignment Problem}
\author{Georgina Hall\thanks{Georgina Hall is with the department of Decision Sciences at INSEAD. Email: \texttt{georgina.hall@insead.edu}} \and Laurent Massouli\'e\thanks{Laurent Massouli\'e is with the DYOGENE Group at INRIA and the MSR-Inria Joint Centre. Email: \texttt{laurent.massoulie@inria.fr}}}
\begin{document}
\date{}
\maketitle

\begin{abstract}
In this paper, we consider the graph alignment problem, which is the problem of recovering, given two graphs, a one-to-one mapping between nodes that maximizes edge overlap. This problem can be viewed as a noisy version of the well-known graph isomorphism problem and appears in many applications, including social network deanonymization and cellular biology. Our focus here is on \emph{partial recovery}, i.e., we look for a one-to-one mapping which is correct on a fraction of the nodes of the graph rather than on all of them, and we assume that the two input graphs to the problem are correlated Erd\H{o}s-R\'enyi graphs of parameters $(n,q,s)$. Our main contribution is then to give necessary and sufficient conditions on $(n,q,s)$ under which partial recovery is possible with high probability as the number of nodes $n$ goes to infinity. In particular, we show that it is possible to achieve partial recovery in the $nqs=\Theta(1)$ regime under certain additional assumptions.
\end{abstract}

\paragraph{Keywords:} {\small Graph Alignment, Correlated Erd\H{o}s-R\'enyi graphs, Partial Recovery} 


\section{Introduction}

The goal of \emph{graph alignment} is to infer, given two graphs, a one-to-one mapping between the nodes of the two graphs such that the number of edge overlaps is maximized. Graph alignment problems appear in a variety of different applications, including computer vision \cite{conte2004thirty}, natural language processing \cite{maccartney2008phrase,lacoste2006word}, biology \cite{elmsallati2015global}, and social network deanonymization \cite{narayanan2009anonymizing}. In biology, graph alignment is used for example to match proteins across species. This is generally difficult to do via a direct comparison of the amino acid sequences that make up the proteins as mutations of the species' genotype can lead to significant differences in the sequences. However, while their sequences may vary greatly, proteins generally retain similar functions in each species' metabolism. By drawing up a Protein-Protein Interaction graph for each species and aligning these graphs across species, biologists are able to match proteins to better understand, e.g., the evolution of protein complexes.
We refer the reader to \cite{elmsallati2015global} for more information on the topic. In the context of social networks, graph alignment has been successfully used to deanonymize private social networks. Arguably, the best-known example is that of the deanonymization of the Netflix Prize database using data from IMDb \cite{narayanan2006break} as it led to a court case against Netflix and the end of the Netflix Prize. Other instances of social network deanonymization via graph alignment have been investigated as well, and comprise e.g. the deanonymization of Twitter using Flickr \cite{narayanan2009anonymizing}. In both cases, the ideas are similar: if a user has an account on two social networks that serve similar purposes (e.g., both Netflix and IMDb enable movie reviewing), then the content shared on the two platforms by the user would be quite similar. In other words, if one was to create a graphical representation of the content shared on the platform by users, then these graphical representations would be very correlated across social networks. (Consider in the Netflix and IMDb case, a bipartite graph that links a user to a movie he or she has reviewed.) If one of the social platforms is public (as is IMDb) and the other, though being private, releases partial or anonymized information about its user content (as Netflix did during the Netflix Prize), then one can simply construct a similar graphical representation of the information, align the graphs obtained, and recover information on the private network from the public network, which leads to a breach of privacy of the private network's user's content.
 
 The set-up we consider here is a stylized mathematical model of problems of this type: we assume that the input graphs are generated via a probabilistic model. More specifically, these input graphs are obtained by constructing two correlated Erd\H{o}s-R\'enyi graphs with parameters $(n,q,s)$ (see Section \ref{sec:math.model} for a mathematical description of correlated Erd\H{o}s-R\'enyi graphs, roughly speaking, $n$ is the number of nodes of the graphs, $q$ is a measure of their sparsity, and $s$ describes how correlated they are), selecting one of the two graphs at random and randomly permuting its labels, keeping the other graph as is. The question we wish to answer is then: given two graphs generated thus, under what conditions on $(n,q,s)$ are we able to recover the underlying permutation? Or, to be more precise, under what conditions on $(n,q,s)$ are we able to \emph{partially} recover the underlying permutation, i.e., recover a permutation that overlaps with the permutation that was used on just a constant fraction $\alpha$ of its vertices? The notion of partial recovery is in contrast with the notions of exact (resp. almost exact) recovery which would aim to recover the correct permutation for all nodes (resp. all but a vanishing fraction of nodes) of the graph. While information-theoretical results such as ours exist in the literature for these two types of recovery, the notion of partial recovery has, to the best of our knowledge, not yet been considered (see Section \ref{sec:math.model} for a literature review). This is in spite of its relevance in applications: for example, it would be enough in the case of social network deanonymization to provide a partially recovered permutation to prove that a privacy breach has occurred. This paper aims to fill this gap in our collective knowledge. 
 
In it, we provide both necessary and sufficient conditions for partial recovery of a graph alignment between two correlated Erd\H{o}s-R\'enyi graphs. Our necessary condition states that if the difference between two probability distributions measured via the Kullback-Leibler divergence is too small---in fact, small in front of $1$---then one cannot hope to recover a partial alignment. The two probability distributions that are contrasted in the result are the joint distribution of two independent edges (one in each graph) and the joint distribution of two correlated edges (again, one in each graph). This result is consequently very easy to interpret: if two correlated edges too closely mimic in probabilistic terms two independent edges, then there is no hope to recover any meaningful alignment. For our sufficient condition, we show that if $nqs=\Theta(1)$ and some other conditions relating to sparsity of the graphs hold, then partial recovery is possible. Though this result is perhaps less intuitively interpreted, it has the advantage of closely paralleling existing information-theoretic results for other types of recovery. Roughly speaking, it is known that the regime $nqs=\log(n)+\omega(1)$ leads to exact recovery and that the regime $nqs=\omega(1)$ leads to almost exact recovery: our result goes a step further---we now know that the regime $nqs=\Theta(1)$ is a regime in which partial recovery is possible. Though our necessary and sufficient conditions are not always easy to compare, it should be noted that in the case where $s=s_0$ and $q=n^{-\beta}$ for some positive $\beta$, they nearly read as counterparts to one another (see Section \ref{subsec:discussion} for a more detailed analysis).

 Before moving onto the outline of the paper, we briefly comment on the proofs of our results to give the reader a flavor of the rest of the paper. The proof for necessity is based on a generalization of Fano's inequality. The proof for sufficiency involves designing and analyzing an algorithm which is guaranteed to return a permutation that overlaps with the true permutation on a fraction $\alpha$ of the nodes if our sufficient conditions hold. The algorithm we propose has some commonalities with that proposed in \cite{cullina2018partial} in the sense that it also involves the study of the degrees of the nodes of a particular intersection graph. However, our analysis is necessarily more involved in order to capture results in a regime ($nqs=\Theta(1)$) which \cite{cullina2018partial} does not cover.

The rest of the paper is structured thus: in Section \ref{sec:math.model}, we describe the mathematical formulation of our model, introduce our notation, present our main results, and discuss practical applications and future results. We also provide a literature review of existing work on the graph alignment problem and contrast these results to ours. In Section \ref{sec:imp}, we give the proof of validity of our necessary conditions and in Section \ref{sec:ub}, that of our sufficient conditions.

\section{Mathematical description of our problem} \label{sec:math.model}

The goal of this section is to set the stage for the rest of the paper. In Section \ref{subsec:GAP}, we present the mathematical formulation of the graph alignment problem and briefly review the literature on the topic. In Section \ref{sec:corr.ER}, we focus more specifically on the case where the two input graphs are assumed to be correlated Erd\H{o}s-R\'enyi graphs and review the literature that is specific to this case. In Section \ref{subsec:landau}, we provide a short reminder on Landau notation and, in Section \ref{subsec:results}, we give our main results. \gh{We end with Section \ref{subsec:discussion} which discusses the practical implications of our work, how the bounds compare, and future directions.}

\subsection{The graph alignment problem} \label{subsec:GAP}

Let $G_A=(V_A,E_A)$ and $G_B=(V_B,E_B)$ be two undirected unweighted graphs such that $|V_A|=|V_B|=n$. Recall that the adjacency matrix of a graph $G$ on $n$ vertices is an $n \times n$ binary matrix with entry $(i,j)$ equal to 1 if there is an edge between node $i$ and node $j$ in the graph, and $0$ otherwise. We denote by $A$ and $B$ the adjacency matrices of $G_A$ and $G_B$ respectively. The goal of graph alignment is to recover a permutation $\pi$ of the labels of $G_B$ such that the overlap between $G_A$ and $G_B$ is maximized. If $P_n$ is the set of permutations over the set $\{1,\ldots,n\}$, this can be written as the following optimization problem:
$$\max_{\pi \in P_n} \sum_{1 \leq i,j \leq n} A_{ij}B_{\pi(i)\pi(j)}+\sum_{1 \leq i,j \leq n}(1-A_{ij})(1-B_{\pi(i)\pi(j)}),$$
i.e., we maximize the number of overlapping edges and the number of non-overlapping edges between $G_A$ and the graph obtained by relabeling $G_B$. This optimization problem is equivalent to $\max_{\pi \in P_n} \sum_{1 \leq i,j \leq n} A_{ij}B_{\pi(i)\pi(j)}$, which corresponds to maximizing the number of overlapping edges only. Note that this equivalence is not true anymore when we consider a generalization of the previous model to a case where the number of nodes in $G_A$ and the number of nodes in $G_B$ are different, see \cite{feizi2019spectral}. One can rewrite the previous optimization problem in yet another way by using permutation matrices rather than permutations. Recall that if $\pi$ is a permutation over $\{1,\ldots,n\}$, then the corresponding permutation matrix $\Pi$ is an $n \times n$ binary matrix with entry $(i,j)$ equal to 1 if and only if $\pi(i)=j$. We use the convention throughout the paper that lower-case $\pi$ refers to the permutation, and upper-case $\Pi$ refers to the corresponding permutation matrix. Furthermore, we denote by $P_{n}^M$ the set of $n \times n$ permutation matrices. With this notation in mind, the graph alignment problem can be written
\begin{align}\label{eq:GAP}
\max_{\Pi \in P_{n}^M} \mbox{Tr}(A\Pi B \Pi^T).
\end{align}

This problem is related to a couple of other well-known combinatorial problems \cite{conte2004thirty}. The graph isomorphism problem, e.g., is a particular case of the graph alignment problem, where there exists a permutation matrix $\Pi$ such that $\mbox{Tr}(A\Pi B \Pi^T)=\mbox{Tr}(A^2)$, i.e., the graphs $G_A$ and $G_B$ overlap exactly up to a relabeling of the nodes of $G_B$. This problem has generated a lot of interest as it sits in a particular computational complexity class: it is not known if deciding whether two graphs are isomorphic is polynomial-time solvable or NP-hard. We refer the reader to \cite{read1977graph} for a survey of the topic. At the other end of the spectrum, the quadratic assignment problem (QAP) is a generalization of the graph alignment problem, where the matrices $A$ and $B$ in (\ref{eq:GAP}) are not limited to adjacency matrices of graphs \cite{rendl1994quadratic}. The quadratic assignment problem is known to be NP-hard to solve \cite{sahni1976p}. However, it does not follow from this result that the subcase of graph alignment problems is NP-hard to solve as we are imposing additional structure on the matrices $A$ and $B$. In fact, the instances that are shown to be hard to solve in \cite{sahni1976p} explicitly preclude the proof from transferring to the graph alignment setting as it relies on the fact that $A$ and $B$ are non-binary matrices. A different reduction based on the maximum common subgraph problem can be used however to show NP-hardness of the graph alignment problem \cite{bayati2013message}.

As a consequence of its intractability, heuristics for solving the graph alignment problem (i.e., problem (\ref{eq:GAP})) abound in the literature. These heuristics can be divided roughly into four categories: methods based on network topological similarity, message-passing methods, spectral methods, and convex relaxation-based methods. Some of these methods do have guarantees under the assumption that the input graphs are randomly generated following the correlated Erd\H{o}s-Renyi model, a model which we formally describe in Section \ref{sec:corr.ER}. We consequently revisit some of the literature we describe here in Section \ref{sec:corr.ER}. The methods based on network topological similarity are probably the most numerous and are especially prevalent in biology applications. They typically involve identifying and matching in both graphs either subgraphs of specific shapes (also known as \emph{graphlets}) or nodes with specific neighborhood structures. Algorithms of this type appear in \cite{kuchaiev2010topological,malod2015graal,zhang2010sapper,barak2018nearly,ding2018efficient}. A more exhaustive list can be found in \cite{conte2004thirty}. Message-passing methods include belief propagation approaches such as the one given in \cite{bayati2013message} as well as percolation-based methods which assume an initial seed set of matched nodes \cite{yartseva2013performance,kazemi2015growing,mossel2019seeded,dai2018performance}. For some of the latter approaches, the requirement that a seed set be given can be somewhat relaxed by considering a phase 1/phase 2-type algorithm. In the first phase, the algorithm infers from the graphs a reasonable seed set by using high-degree nodes; this seed set is then used to proceed with the algorithm in phase 2. Methods which use spectral properties of the graphs are also popular; see, e.g. \cite{singh2008global} which relies on a PageRank-style algorithm, or \cite{umeyama1988eigendecomposition,feizi2019spectral,fan2019spectral, fan2019spectral2} which focus on aligning the eigenvectors of the adjacency matrices of the graphs. Finally, some methods involve approximating the optimization problem given in (\ref{eq:GAP}) by a convex optimization problem. Indeed, (\ref{eq:GAP}) is equivalent to $\min_{\Pi \in P_{n}^M} ||A-\Pi B \Pi^T||_F^2,$ where $|| \cdot ||_F$ is the Frobenius norm. By invariance of the Frobenius norm with respect to orthogonal matrices, this can be written as 
$\min_{\Pi \in P_{n}^M} ||A\Pi-\Pi B||_F^2.$
A natural convex relaxation can then be obtained by replacing the set of permutation matrices by its convex hull, the set of doubly stochastic matrices, i.e., the set of matrices $$DS_n \mathrel{\mathop{:}}=\{D \in \mathbb{R}^{n \times n} ~|~ D1=1, D^T1=1, D \geq 0\},$$ where $1$ is the all-ones vector. The problem then becomes $\min_{D \in DS_n} ||AD-D B||_F^2$,
 which is a quadratic program to solve. Unfortunately, the matrix obtained is rarely a permutation matrix---in fact, it can be shown \cite{lyzinski2016graph} that, under the correlated Erd\H{o}s-Renyi model that we describe in Section \ref{sec:corr.ER}, this is very unlikely to occur. A second step consequently involves the projection of the doubly stochastic matrix obtained to the set of permutation matrices, a projection that can be done in polynomial time, via, e.g., the Hungarian algorithm. Quadratic programming relaxations of this type are studied in \cite{fogel2013convex,vogelstein2015fast, kezurer2015tight,dym2017ds++,feizi2019spectral,fishkind2012seeded,zaslavskiy2008path}. These relaxations sometimes also involve a quadratic regularization term added to the objective, which corresponds to the Frobenius norm of the doubly stochastic matrix. An alternative convex relaxation for (\ref{eq:GAP}) which gives rise to a linear program rather than a quadratic program can be obtained by replacing the Frobenius norm squared by the 1-norm applied to the vectorized matrix, i.e., $$||A \Pi-\Pi B||_1=\sum_{i,j} |(A\Pi-\Pi B)_{ij}|.$$ This approach is much less prevalent in the literature and to the best of our knowledge only appears in \cite{almohamad1993linear}. There also exist convex relaxations of (\ref{eq:GAP}) based on semidefinite programming. These are relaxations that have more often than not been developed for the QAP and are then leveraged on this subcategory of problems \cite{schellewald2005probabilistic,zhao1998semidefinite}. Though they can be quite powerful, their main caveat remains their prohibitively high running-time.

The boundaries of these four classes of approaches for the graph alignment problem are of course quite porous with many links between them. For example, the topological similarity method described in \cite{ding2018efficient}, which involves matching nodes via the number of neighbors of their neighbors, can be viewed as a second-order approximation of the doubly stochastic relaxation. Indeed, it is known that two graphs $G_A$ and $G_B$ have the same \emph{ultimate degree sequence} if and only if there exists a doubly stochastic matrix $D$ such that $AD=DB$ \cite{scheinerman2011fractional}. As the ultimate degree sequence of a graph is the ultimate degree sequence of its nodes, and the ultimate degree sequence of a node is an infinite list comprising in first position the degree of the node, in second position the degree of its neighbors, in third position the degree of the neighbors of its neighbors, and so on, it is not difficult to see why the algorithm in \cite{ding2018efficient} is a second-order approximation of the doubly stochastic relaxation. Likewise, the spectral method presented in \cite{fan2019spectral2} can be viewed as a less-constrained version of the doubly stochastic relaxation; see \cite{fan2019spectral2} for details. Further note that we have focused above on methods that return graph alignments under the assumption that the graphs $G_A$ and $G_B$ are unweighted graphs on the same number of nodes. Extensions to the graph alignment problem which cover cases where the number of nodes in each graph are different \cite{kazemi2015can} or where the graphs are weighted \cite{shirani2018typicality} have also been developed.

\subsection{The correlated Erd\H{o}s-R\'enyi model} \label{sec:corr.ER}

The goal of this paper is not to study the model in its full generality, nor from a computational perspective. Rather, we focus on a particular random generative model for the input graphs, the correlated Erd\H{o}s-R\'enyi model \cite{pedarsani2011privacy}, which is very common in the literature, and wish to understand the conditions under which one can or cannot recover an alignment in this model from an information-theoretic perspective. 

We start this section with a description of the Erd\H{o}s-R\'enyi model. Let $n$ be an integer, and $q,s$ scalars in $[0,1]$ with $q<s$. Let $G$ be an Erd\H{o}s-R\'enyi graph with parameters $n$ and $q/s \leq 1$. In other words, $G$ is a graph on $n$ nodes, two nodes of which are independently connected with probability $q/s$. Two copies of this graph are created and each copy is subsampled with probability $s$, which means that each edge present in the copies is maintained with probability $s$ and deleted with probability $1-s$ independently and at random. We denote by $G_A$ and $G_{B'}$ the two graphs obtained and by $A$ and $B'$ their adjacency matrices. They are both Erd\H{o}s-R\'enyi graphs with parameter $q$, but entries $(i,j)$ of $A$ and $B'$ are correlated Bernoulli random variables with correlation $\frac{s-q}{1-q}$. This is what gives this generation process the name of correlated Erd\H{o}s-R\'enyi model. We then let $\pi^*$ be a permutation over $\{1,\ldots,n\}$ with associated permutation matrix $\Pi^*$. We obtain $G_B$ by permuting the labels of $G_{B'}$ with $\pi^*$, i.e., $B_{\pi^*(i)\pi^*(j)}=B'_{ij}$ or equivalently, $\Pi^*B\Pi^{*T}=B'$. The input to the graph alignment problem is then $G_A$ and $G_B$ and our goal is to recover $\pi^*$ (or equivalently $\Pi^*$) from these observed graphs. In this framework, there is a notion of ground truth, given by $\pi^*$, which we refer to as \emph{the} graph alignment in the rest of the paper. It is related to the definition given in Section \ref{subsec:GAP} (which is the one used in the absence of a generative model and a ground truth) in the sense that, in the correlated Erd\H{o}s-R\'enyi model, any matrix $\hat{\Pi}$ solution to (\ref{eq:GAP}) is a maximum a posteriori estimator of $\Pi^*$ provided that the correlation between edges is positive, i.e., $s>q$.

As mentioned previously, the parameter $s$ quantifies the amount of noise added to the model. If $s=1$, $G_A$ and $G_B$ are isomorphic. In this setting, one can recover exactly the permutation both information-theoretically and computationally if $\log(n)+\omega(1) \leq nq \leq n-\log(n)- \omega(1)$ \cite{bollobas1982distinguishing,wright1971graphs,czajka2008improved}. The proofs of this result rely on showing that the automorphism group of $G_A$ is trivial under these conditions.

In our case, we assume $s<1$ and wish to recover the permutation $\pi^*$ \emph{partially}. This means that we wish to find a permutation $\hat{\pi}$ such that the overlap between $\hat{\pi}$ and the permutation $\pi^*$, 
$$\beta(\hat{\pi},\pi^*) \mathrel{\mathop{:}}=\frac{1}{n} \sum_{i=1}^n \textbf{1}_{\pi(i)=\pi^*(i)}$$
is greater than some constant $\alpha>0$ with high probability as $n$ goes to infinity. We refer to such a permutation $\hat{\pi}$ as a partial alignment of the graphs $G_A$ and $G_B$, or alternatively we say that we have partially recovered an alignment of $G_A$ and $G_B$. This concept is defined as an analog to the concept of partial recovery in the stochastic block model for community detection \cite{abbe2017community}. As far as we know, this paper is the first to cover this notion of recovery for graph alignment problems in the correlated Erd\H{o}s-Renyi model from an information theoretic perspective (some computational results for sparse graphs can be found in \cite{ganassali2020tree}).

A vast majority of prior papers have instead focused on \emph{exact recovery}, i.e., the case where one wishes to find $\hat{\pi}$ such that $\p(\beta(\hat{\pi},\pi^*)=1)=1-o(1)$. It is known that exact recovery can be achieved if $nqs \geq \log(n) +\omega(1)$ together with some conditions on the sparsity of the graphs and the strength of correlation of the edges. A converse result which states that one cannot hope to exactly recover $\pi^*$ if $nqs \leq \log(n)-\omega(1)$ and the correlation between the edges is strictly less than one is also given \cite{cullina2016improved,cullina2017exact}. The algorithm that shows that exact recovery can be achieved under the aforementioned conditions is exponential time. Many papers have consequently focused on providing polynomial-time (or quasi-polynomial-time) algorithms that exactly recover $\pi^*$ under certain conditions on the parameters \cite{barak2018nearly,mossel2019seeded,dai2018performance,ding2018efficient}. \cite{barak2018nearly} propose a quasi-polynomial-time algorithm that exactly recovers $\pi^*$ under the assumption that $nq/s \in [n^{o(1)},n^{1/153}] \cup [n^{2/3};n^{1-\delta}]$ for small $\delta>0.$ Mossel and Xu provide instead a seeded algorithm: under the assumption that the seed set is large enough (i.e., the fraction of seeded notes is larger than $n^{-1/2+3\epsilon}$) for fixed $\epsilon>0$), then exact recovery is possible all the way down to the information-theoretic threshold (under the condition that $nq/s \leq n^{1/2+\epsilon}$). \cite{dai2018performance} extend an algorithm called \emph{canonical labeling}, which proved to be successful for graph isomorphism, to the graph alignment setting. The algorithm relies on a first step, which generates a surrogate of a seed set, to move on to the second set where these seeds are used to align the rest of the graph. Under the assumption that one is able to generate a sufficiently large set in the first step, then exact recovery can be achieved under the condition that $nqs \geq \omega(n^{4/5}\log(n)^{7/5})$ and $q(1-s) \leq o((qs)^5/\log(n)^6)$. Finally, \cite{ding2018efficient} analyze a polynomial-time algorithm based on neighbors of neighbors of nodes in the graph and are able to show its success when $nq =\Omega(\log(n)^2)$ and $1-s=O(1/\log(n)^2)$. \cite{fan2019spectral,fan2019spectral2} propose an algorithm called GRAMPA for graph alignment: in particular, they show that if two graphs are correlated Erd\H{o}s-R\'enyi graphs with edge correlation coefficient $1-\sigma^2$ and average degree at least $\text{polylog}(n)$, then GRAMPA exactly recovers the alignment whp when $\sigma \lesssim 1/\text{polylog}(n)$.

Another type of recovery that has been investigated in the literature is that of \emph{almost exact recovery}, i.e., the case where one wishes to recover $\hat{\pi}$ in such a way that $ P(\beta(\hat{\pi},\pi^*)=1-o(1))=1-o(1)$. This is a stronger notion of recovery than the one we are interested in though it also appears in the literature under the name of partial recovery \cite{cullina2018partial}. For the moment, only information-theoretic results have been shown for this case: using the notion of $k$-core alignments, \cite{cullina2018partial}  show that almost exact recovery is possible when $nqs \geq \omega(1)$, $\frac{q^2(1-s)^2}{qs(1-2q+qs)}+2q(1-s) \leq n^{-\Omega(1)}$ and $qs \leq \frac{1}{8e^3}$. A converse result in the same paper states that almost exact recovery is impossible when $nqs \leq O(1)$ (and correlation between the edges in the graphs is strictly less than one.) As this paper is the closest in the literature to ours, we further contrast our results to the ones appearing in \cite{cullina2018partial} in Section \ref{subsec:results}, where we state our main results.

\subsection{A reminder regarding Landau notation}\label{subsec:landau}

As Landau notation is prevalent in the rest of the paper, we briefly remind the reader of the conventions used in this notation. Let $f$ and $g$ be two functions of $n$. We say that $f(n) \sim g(n)$ if $\lim_{n \rightarrow \infty} \frac{f(n)}{g(n)}=1$. We further say that $f(n)=o(g(n))$ if $\lim_{n \rightarrow \infty} \frac{f(n)}{g(n)}=0$ and that $f(n)=\omega(g(n))$ if $\lim_{n\rightarrow \infty} \left|\frac{f(n)}{g(n)}\right|=\infty$. We finally say that $f(n)=O(g(n))$ (resp. $f(n)=\Omega(g(n))$) if there exists $k >0$ and $n_0 \in \mathbb{N}$ such that for all $n \geq n_0$, $|f(n)| \leq k \cdot g(n)$ (resp. $f(n) \geq k \cdot g(n)$), and that $f(n)=\Theta(g(n))$ if $f(n)=O(g(n))$ and $f(n)=\Omega(g(n))$.

\subsection{Main results}\label{subsec:results}

The focus of this paper is on the case of \emph{partial recovery}, i.e., for fixed $\alpha \in (0,1)$, we wish to recover $\hat{\pi}$ such that
$$\p(\beta(\hat{\pi},\pi^*)>\alpha)=1-o(1).$$

A running assumption throughout the paper is that $s>q$. This implies that the covariance of $(A_{ij},B'_{ij})$, which we denote by $\sigma^2$, is positive, or equivalently that the correlation between $A_{ij}$ and $B'_{ij}$, which we denote by $\rho$, is positive. Indeed,
\begin{align}\label{eq:cov.corr}
\sigma^2=q(s-q) \text{ and } \rho=\frac{q(s-q)}{q(1-q)}.
\end{align}

\subsubsection{Impossibility result.}\label{subsec:major.results.lb}

The statement of the theorem requires us to first define two probability distributions over $\{0,1\} \times \{0,1\}$, which will appear frequently in the rest of the paper. These are the probability distributions of two correlated edges $(A_{ij},B'_{ij})$ in $G_A$ and $G_{B'}$, and of two independent edges $(A_{ij},B'_{kl})$, $(i,j) \neq (k,l)$, in $G_A$ and $G_{B'}$. Let $P$ be the distribution of $(A_{ij},B'_{ij})$. We use the shorthand $p_{xy}$ to denote the probability that $A_{ij}=x$ and $B_{ij}'=y$. By definition of the correlated Erd\H{o}s-Renyi model, we have that 
\begin{align}\label{eq:def.p}
\poo=1-2q+qs, ~\poi=q(1-s),~\pio=q(1-s), ~\pii=qs.
\end{align}
Likewise, let $Q$ be the distribution of $(A_{ij},B_{kl}')$. Using similar notation, we have that:
\begin{align}\label{eq:def.q}
\qoo=1-2q+q^2, ~\qoi=q(1-q),~\qio=q(1-q),~\qii=q^2.
\end{align}
Intuitively, our success in finding a graph alignment should hinge on how similar these two distributions are: if they are too similar for example, then we cannot hope to recover the underlying graph alignment. In this direction we introduce a measure of disparity between probability distributions, known as the \emph{Kullback-Leibler} divergence. Recall that the Kullback-Leibler divergence between two discrete probability distributions $P$ and $Q$ over $\mathcal{X}$ is given by 
$$D(P||Q)=\sum_{x \in \mathcal{X}} P(x) \log \left( \frac{P(x)}{Q(x)}\right).$$
It is always nonnegative and equals zero when $P=Q$ almost everywhere. We are now ready to state our theorem.

\begin{theorem}\label{th:lower.bd}
	Let $G_A$ and $G_B$ be two graphs generated via the correlated Erd\H{o}s-Renyi model $(n,q,s)$ with $\pi^*$ selected uniformly at random. Let $\alpha \in (0,1)$, possibly dependent on $n$. If 
	$$\frac{D(P||Q)}{\alpha}= o\left(\frac{\log(n)}{n}\right)$$	then, for any algorithm which infers $\hat{\pi}$ from $G_A$ and $G_B$, we have	$\p[\beta(\hat{\pi},\pi^*)<\alpha]\rightarrow 1 \text{ when } n\rightarrow \infty.$ In other words, no learning algorithm will be able to infer, from $G_A$ and $G_B$, a permutation $\hat{\pi}$ that overlaps with $\pi^*$ on more than a fraction $\alpha$ of its vertices.
\end{theorem}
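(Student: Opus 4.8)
The plan is to run a generalized Fano argument along the Markov chain $\pi^*\to(G_A,G_B)\to\hat\pi$. Write $p_s:=\p[\beta(\hat\pi,\pi^*)\geq\alpha]$ and let $Z:=\mathbf{1}[\beta(\hat\pi,\pi^*)\geq\alpha]$, a function of $(\pi^*,\hat\pi)$. Since $\pi^*$ is uniform on $P_n$, decomposing $H(\pi^*\mid\hat\pi)=H(Z\mid\hat\pi)+H(\pi^*\mid\hat\pi,Z)$ and bounding $H(Z)\leq\log 2$, $H(\pi^*\mid\hat\pi,Z=1)\leq\log N_\alpha$ with $N_\alpha:=\max_{g}\#\{\pi\in P_n:\beta(\pi,g)\geq\alpha\}$, and $H(\pi^*\mid\hat\pi,Z=0)\leq\log(n!)$, gives $H(\pi^*\mid\hat\pi)\leq\log 2+p_s\log N_\alpha+(1-p_s)\log(n!)$. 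Rearranging, together with the data-processing inequality $I(\pi^*;\hat\pi)\leq I(\pi^*;G_A,G_B)$ (randomized algorithms being handled by appending an independent coin to the chain), yields
\[ p_s\;\leq\;\frac{\log 2 + I(\pi^*;G_A,G_B)}{\log(n!)-\log N_\alpha}. \]
It then suffices to upper bound the numerator by $\log2+\binom n2 D(P||Q)$ and lower bound the denominator by $\alpha n\log n\,(1+o(1))$, after which the hypothesis $D(P||Q)=o(\alpha\log n/n)$ forces $p_s\to 0$, i.e.\ $\p[\beta(\hat\pi,\pi^*)<\alpha]\to1$.

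For the numerator I would exploit that $G_A$ is an Erd\H{o}s--R\'enyi graph independent of $\pi^*$, so $I(\pi^*;G_A,G_B)=I(\pi^*;G_B\mid G_A)=\E_{G_A}\big[H(G_B\mid G_A)-H(G_B\mid G_A,\pi^*)\big]$. Conditioned on $G_A=a$ with $m_A$ edges, the edges of $G_{B'}$ are independent across pairs with $\p(B'_{ij}=1\mid A_{ij}=1)=s$ and $\p(B'_{ij}=1\mid A_{ij}=0)=r_0:=q(1-s)/(1-q)$; relabeling by $\pi^*$ is deterministic, so $H(G_B\mid G_A=a,\pi^*)=m_A h(s)+(\binom n2-m_A)h(r_0)$, where $h$ denotes binary entropy. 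By subadditivity of entropy $H(G_B\mid G_A=a)\leq\binom n2 h(\bar r_a)$, where for every pair $\{k,l\}$ one computes $\p(B_{kl}=1\mid G_A=a)=\bar r_a:=(m_A s+(\binom n2-m_A)r_0)/\binom n2$ (averaging over the uniform $\pi^*$ makes $\{\pi^{*-1}(k),\pi^{*-1}(l)\}$ a uniform pair). Hence, writing $p=m_A/\binom n2$, $I(\pi^*;G_B\mid G_A=a)\leq\binom n2 f(p)$ with $f(p):=h\big(ps+(1-p)r_0\big)-p\,h(s)-(1-p)h(r_0)$. Since $h$ is concave, $f$ is concave, so Jensen over $m_A\sim\mathrm{Bin}(\binom n2,q)$ gives $I(\pi^*;G_A,G_B)\leq\binom n2 f(q)$, and a direct computation with (\ref{eq:def.p})--(\ref{eq:def.q}) shows $f(q)=h(q)-q\,h(s)-(1-q)h(r_0)=D(P||Q)$.

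For the denominator, a permutation agreeing with a fixed map $g$ on at least $\alpha n$ coordinates is obtained by choosing $k\geq\lceil\alpha n\rceil$ agreement positions and deranging the remaining $n-k$, so $N_\alpha\leq\sum_{k\geq\lceil\alpha n\rceil}\binom nk (n-k)!=n!\sum_{k\geq\lceil\alpha n\rceil}\tfrac1{k!}\leq 2\,n!/(\lceil\alpha n\rceil)!$, whence by Stirling $\log(n!)-\log N_\alpha\geq\log\big((\lceil\alpha n\rceil)!\big)-\log 2=\alpha n\log(\alpha n)(1+o(1))$, which is $\alpha n\log n\,(1+o(1))$ when $\alpha$ does not decay too fast. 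Plugging both bounds into the displayed inequality and using $\binom n2 D(P||Q)=o(\alpha n\log n)$ (equivalent to the hypothesis) gives $p_s\to 0$, as claimed.

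The step I expect to be the main obstacle is the numerator bound with the \emph{exact} constant $\binom n2 D(P||Q)$ rather than a lossy variant: this hinges on conditioning on $G_A$ so that the conditional law of $G_B$ factorizes over pairs, and on the concavity of $f$ making Jensen's inequality land precisely on $D(P||Q)$ at $p=q$. Verifying the identity $f(q)=D(P||Q)$ from (\ref{eq:def.p})--(\ref{eq:def.q}) and being careful about the regime of $\alpha$ (which may vanish with $n$) in the ball-size estimate are the remaining points to nail down.
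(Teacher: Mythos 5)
Your proposal is correct and reaches the same conclusion $I(\pi^*;G_A,G_B)\leq\binom n2 D(P\|Q)$ as the paper, but via a genuinely different route for that key bound. The paper starts from $I(\pi^*;(A,B))=H((A,B))-H((A,B)\mid\pi^*)$, applies subadditivity to the unconditional entropy, observes that the marginal law of $(A_{12},B_{12})$ is the mixture $\lambda P+(1-\lambda)Q$ with $\lambda=2/(n(n-1))$, and then bounds the per-pair entropy gap by first-order Taylor expansion of $f(x)=-x\log x$ around $q_{xy}$, using the algebraic identity $q_{11}q_{00}/q_{01}^2=1$ to make the remainder cancel. You instead exploit that $G_A\perp\pi^*$ to write $I(\pi^*;G_A,G_B)=I(\pi^*;G_B\mid G_A)$, condition on $G_A=a$ so that the conditional law of $G_B$ given $(\pi^*,G_A)$ factorizes over pairs, apply subadditivity to $H(G_B\mid G_A=a)$, and land on a concave function $f(p)$ of the edge density $p=m_A/\binom n2$; a single Jensen step over $m_A\sim\mathrm{Bin}\left(\binom n2,q\right)$ then gives $f(q)$, and the identity $f(q)=h(q)-qh(s)-(1-q)h(r_0)=I_P(A;B)=D(P\|Q)$ (which holds because $Q$ is the product of $P$'s marginals) closes the bound. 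Your route is arguably cleaner: the concavity-plus-Jensen structure makes the exact constant transparent, whereas the paper's cancellation $\log(q_{11}q_{00}/q_{01}^2)=0$ looks slightly miraculous until one recognizes it as the same independence structure you use up front. The Fano part is essentially identical to the paper's (you re-derive Lemma 20 of [bhm] from scratch, gaining $\log 2$ in place of $1$, immaterial asymptotically), and the ball-count is also essentially the paper's Proposition \ref{lem:M.alpha}: note $\binom nk(n-k)!=n!/k!$ is an over-count of the $k$-fixed-point permutations $D_{n,k}=\binom nk D_{n-k}$, which is fine for the upper bound on $N_\alpha$. One small imprecision shared with the paper: replacing $\log(\alpha n)$ by $\log n$ in the denominator requires $\alpha$ to not decay polynomially close to $1/n$; you flag this correctly, and the paper's final display keeps $\log(n\alpha)$ for the same reason.
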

The proof of this result is given in Section \ref{sec:imp} and relies on a generalization of Fano's inequality. Theorem \ref{th:lower.bd} is easy to interpret: when $P$ and $Q$ are close, then $D(P||Q)$ is smaller, which makes this condition easier to meet. In other words, as $P$ and $Q$ get more and more similar, it becomes harder to recover the graph alignment. This is also true as $\alpha$ gets closer to $1$, i.e., when we are trying to recover a more significant fraction of the nodes. Theorem \ref{th:lower.bd} implies the following corollary, which enables comparisons with \cite{cullina2018partial} and Theorem \ref{th:ub}.

\gh{ \begin{corollary}\label{corr:lb.nqs}
Let $G_A$ and $G_B$ be two graphs generated via the correlated Erd\H{o}s-Renyi model $(n,q,s)$ with $\pi^*$ selected uniformly at random. Let $\alpha \in (0,1)$, possibly dependent on $n$. If 
\begin{align}\label{conditions.corr}	
q=\Omega(n^{-k}) \text{ for some $k>0$, } q=o(n^{-1/2}), \text{ and } nqs=o(1),
\end{align} 
then, for any algorithm which infers $\hat{\pi}$ from $G_A$ and $G_B$, we have	$\p[\beta(\hat{\pi},\pi^*)<\alpha]\rightarrow 1 \text{ when } n\rightarrow \infty.$
\end{corollary}
The proof of this result is also given in Section \ref{sec:imp}.
In this form, it is much simpler to compare the impossibility result to the one that features in \cite{cullina2018partial} for almost exact recovery. As a reminder, this is the closest result to ours in the literature, and it states that almost exact recovery is impossible when $nqs=O(1)$. Thus, when the conditions on $q$ are met, our result is more restrictive than in \cite{cullina2018partial}, but the conclusion we get is correspondingly stronger. In \cite{cullina2018partial}, one cannot hope to obtain a permutation that agrees with $\pi^*$ on all but a vanishing fraction of vertices; in our case, we also rule out the possibility of obtaining a permutation that agrees with $\pi^*$ on more than a fraction $\alpha$ of vertices.
}


\subsubsection{Possibility result}
The possibility result is given next.

\begin{theorem}\label{th:ub}
	Let $0<\alpha<1$. If, there exist $\beta>0$, $\gamma>0$ such that for large enough $n$,
	\begin{equation}\label{eq:conditions.ub}
\begin{aligned}
&nqs\geq \max \left\{20,84\cdot \log \left( \frac{2}{1-\alpha}\right),\frac{16}{\min(\gamma,\beta)(1-\alpha)}\right\}\\
&s > \frac{8}{1-\alpha} \cdot q\\
&\frac{2q(1-s^2)}{s} \leq n^{-\beta}\\
&qs \leq n^{-2\gamma}
\end{aligned}
\end{equation}
then there exists an (exponential-time) algorithm which returns a permutation $\hat{\pi}$ such that $$\p[\beta(\hat{\pi},\pi^*) \geq \alpha] \rightarrow 1 \text{ when }n\rightarrow \infty.$$
\end{theorem}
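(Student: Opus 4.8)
\emph{The plan} is to analyse an exponential-time algorithm built around the intersection graph of a candidate matching, in the spirit of the $k$-core alignment of \cite{cullina2018partial} but adapted to the $nqs=\Theta(1)$ regime. Take $\pi^*$ to be the identity, so $B=B'$; then $(A_{ij},B_{ij})\sim P$ for each pair, while $A_{ij}$ and $B_{kl}$ are independent for $\{i,j\}\neq\{k,l\}$. For a partial matching $M$ (an injective partial map $V_A\to V_B$) I define its \emph{agreement graph} $H_M$ on $\mathrm{dom}(M)$ by putting an edge $ij$ whenever $A_{ij}=1$ and $B_{M(i)M(j)}=1$; thus $H_M$ is simultaneously a subgraph of $G_A$ and of a relabelling of $G_B$, and $H_{\pi^*}$ is exactly the intersection graph $G_A\cap G_B$, which is Erd\H{o}s--R\'enyi $G(n,p_{11})=G(n,qs)$ with independent edges. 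The algorithm would fix a large integer constant $r=r(\alpha,\beta,\gamma)$, call $M$ \emph{admissible} if $H_M$ is connected with minimum degree at least $r$, output an admissible $\widehat M$ of maximum cardinality (searching over all matchings, hence exponential time), and extend it arbitrarily to a permutation $\hat\pi$.

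\emph{First} I would establish a completeness statement: a large \emph{correct} admissible matching exists whp. Taking $M^*:=\pi^*|_K$ with $K$ the $r$-core of $H_{\pi^*}=G(n,qs)$, the condition $nqs\ge 16/(\min(\gamma,\beta)(1-\alpha))$ puts $nqs$ above the $r$-core emergence threshold $c_r^\ast\asymp r\asymp 1/\min(\gamma,\beta)$, so standard results on cores of sparse Erd\H{o}s--R\'enyi graphs give that whp $K$ is connected and $|K|\ge\psi_r(nqs)\,n$ with $\psi_r(c)\to1$ as $c\to\infty$; the term $nqs\ge 84\log(2/(1-\alpha))$ then forces $\psi_r(nqs)\ge\frac{1+\alpha}{2}$. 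Hence whp $|\widehat M|\ge|M^*|\ge\frac{1+\alpha}{2}n$.

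\emph{Next}, and this is the heart of the argument, I would prove soundness: whp every admissible $M$ is wrong on at most $\frac{1-\alpha}{2}n$ of its pairs, which combined with $|\widehat M|\ge\frac{1+\alpha}{2}n$ yields $\beta(\hat\pi,\pi^*)\ge\frac{1+\alpha}{2}-\frac{1-\alpha}{2}=\alpha$. The mechanism is that every agreement edge incident to a \emph{wrong} pair $(i,v)$ ($v\neq i$) is ``fake'': one cannot have $\{v,w\}=\{i,j\}$ for $(j,w)\in M$ (up to a negligible set of transpositions), so $A_{ij}$ and $B_{vw}$ are independent and such an edge has probability $q^2$ rather than $p_{11}=qs$. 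Here the sparsity hypotheses pay off: $qs\le n^{-2\gamma}$ and $2q(1-s^2)/s\le n^{-\beta}$ give $q/s=qs+q(1-s^2)/s\le n^{-2\gamma}+\tfrac12 n^{-\beta}$, hence $q^2=(qs)(q/s)\le 2n^{-2\min(\gamma,\beta)}$, while $s>8q/(1-\alpha)$ keeps fake edges far rarer than true ones. A set $W$ of $w$ wrong pairs in an admissible $M$ must carry at least $rw/2$ fake agreement edges; choosing $W$ costs $\le n^{2w}$, and in the sparse regime where $G_A,G_B$ have polylogarithmic maximum degree $\Delta$ only $O(w\Delta)$ fake edges can touch $W$, so a first-moment bound gives roughly
\[
\E\bigl[\#\{\text{admissible }M\text{ wrong on }\ge w\text{ pairs}\}\bigr]\ \le\ \bigl[\,C\,n\,(\Delta q^2)^{r/2}\,\bigr]^{w},
\]
which is $o(1)$ once $r>1/\min(\gamma,\beta)$ (using $q^2\le 2n^{-2\min(\gamma,\beta)}$ and $\Delta=n^{o(1)}$), and summing over $w\ge\frac{1-\alpha}{2}n$ finishes. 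This is exactly where $nqs\ge 16/(\min(\gamma,\beta)(1-\alpha))$ is forced: it must be large enough that the $r$-core with $r>1/\min(\gamma,\beta)$ exists \emph{and} occupies more than a $\frac{1+\alpha}{2}$ fraction of the vertices.

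\emph{The hard part}, I expect, is making the soundness bound uniform over all of \eqref{eq:conditions.ub}: the estimate above is honest only when $G_A$ has polylogarithmic maximum degree, and when $nq$ is polynomially large I would have to enumerate wrong substructures along the connectivity of $H_M$ (replacing the factor $n^{2w}$ by $\mathrm{poly}(\Delta)^w$) and verify that in that denser regime the hypotheses in fact force $nqs=\omega(\log n)$, so that a cruder argument --- a direct union bound showing the plain overlap-maximiser already beats every permutation wrong on $\ge(1-\alpha)n$ vertices --- takes over. Reconciling the two regimes and pinning down the admissibility notion so that completeness and soundness hold with the \emph{same} constant $r$ and the \emph{same} explicit constants in \eqref{eq:conditions.ub} is where most of the technical effort concentrates; the remaining ingredients are routine concentration for sparse Erd\H{o}s--R\'enyi graphs (degree sequences, sizes and connectivity of cores) together with bookkeeping.
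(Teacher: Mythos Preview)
Your completeness step is essentially the paper's: the intersection graph under $\pi^*$ is $G(n,qs)$, and a $k$-core argument (with $k$ of order $nqs$) shows that whp at least a $(1+\alpha)/2$ fraction of vertices have high enough degree. The paper calls such a permutation \emph{good} (at least $n(1+\alpha)/2$ vertices $i$ with $\sum_{j\ne i}A_{ij}B_{\pi(i)\pi(j)}\ge nqs/2$) and does not impose connectivity or work with partial matchings, but this is a minor packaging difference.

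The soundness argument is where your proposal diverges from the paper and where there is a real gap. The paper does \emph{not} do a first-moment count of wrong substructures conditioned on $G_A$ having small maximum degree. Instead it takes a union bound over all permutations $\pi$ with $\beta(\pi,\pi^*)\le\alpha$ and, for each, bounds $\p(\pi\text{ good})$ by a Chernoff bound on the scalar
\[
\sum_{(i,j)\in S_2}A_{ij}B'_{p(i)p(j)},\qquad p=\pi\circ\pi^{*-1},\quad S_2=\{(i,j):p(i)\neq i\}.
\]
The dependencies among the summands are handled by decomposing $S_2$ into cycles of the permutation $(i,j)\mapsto(p(i),p(j))$ acting on ordered pairs, and computing the moment generating function along a $k$-cycle via a $2\times 2$ transfer matrix: one gets a closed form $\E[e^{2tZ_k}]=\lambda_+^k+\lambda_-^k$ with $\lambda_\pm$ the eigenvalues of a matrix with trace $T=p_{11}(e^{2t}-1)+1$ and determinant $D=\sigma^2(e^{2t}-1)$. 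A short calculation then collapses everything to $\p(\pi\text{ good})\le \exp\{O(n)\}\cdot\zeta^{\,n(1-\alpha)\,np_{11}/8}$, where $1/\zeta\ge n^{\min(\gamma,\beta)+o(1)}$ follows directly from the third and fourth hypotheses. This beats the $n!$ from the union bound \emph{uniformly} over the whole range of~\eqref{eq:conditions.ub}; no degree conditioning, no regime split, no connectivity.

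Your sketch, by contrast, leans on $\Delta=n^{o(1)}$, and your proposed fallback (``when $nq$ is polynomially large the hypotheses force $nqs=\omega(\log n)$'') is not true in general: take $q=n^{-3/4}$, $s=\lambda n^{-1/4}$ with $\lambda$ a large constant. Then $nq=n^{1/4}$, so $\Delta$ is polynomial, yet $nqs=\lambda=\Theta(1)$; all four conditions of~\eqref{eq:conditions.ub} hold with $\beta\le 1/2$, $\gamma<1/2$. Neither branch of your two-regime argument applies here as written. One can try to push the first-moment count through with polynomial $\Delta$, but then the bookkeeping (how many fake edges can touch $W$, how the rest of $M$ is chosen, why connectivity saves a factor of $n$ per wrong pair) has to be made precise and tied to the specific constants in~\eqref{eq:conditions.ub}; the paper sidesteps all of this with the MGF/cycle computation, which is where the actual technical content lives.
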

Crucially, our assumptions cover the case where $nqs=\Theta(1)$ as the case where $nqs=\omega(1)$ is already covered by \cite{cullina2018partial}. It is shown there that when $nqs=\omega(1)$, almost exact recovery is guaranteed to happen.

The complete proof of this result is given in Section \ref{sec:ub}. The algorithm we propose stems from the idea that if we were able to select $\pi^*$ correctly among all permutations, then the intersection graph we would observe between $G_A$ and $G_B$ permuted by $\pi^*$ would be an Erd\H{o}s-R\'enyi graph of parameters $(n,qs)$. Our choice of a permutation $\pi$ is then such that the intersection graph obtained with $\pi$ has some commonalities with an $ER(n,qs)$ graph. In particular, in our regime of interest $(nqs=\Theta(1))$, it is known that if $nqs$ is above some constant threshold, a positive fraction of the nodes in $ER(n,qs)$ (with this fraction increasing in $nqs$) has degree greater than or equal to $nqs/2$. As we are only interested in returning a permutation $\pi$ that overlaps with $\pi^*$ on a fraction $\alpha$ of the nodes, our algorithm will output a permutation which gives rise to an intersection graph where only a little more than a fraction $\alpha$ of the nodes have degree greater than or equal to $nqs/2$.

In spirit, our algorithm has many commonalities with the algorithm given in \cite{cullina2018partial}---it was in fact inspired by the authors' results. The main difference is of course the regime under consideration. In their case, they consider $nqs=\omega(1)$ and show that almost exact recovery can be achieved. In our case, we show that in the regime $nqs=\Theta(1)$, we can achieve partial recovery. Combined with \cite{cullina2017exact} which discusses exact recovery, this leads to a nearly complete picture of which type of recovery is possible in which regime. In terms of proof techniques, as our algorithm shares many commonalities with the algorithm given in \cite{cullina2018partial}, the key ideas behind our proof do overlap considerably with those in \cite{cullina2018partial}. There are two main differences. First, the change of regime requires a much finer analysis to show that the algorithm that we propose terminates. Second, we propose an alternative approach to analyzing the moment generating function of the sum of correlated random variables $\sum_{i\neq j} A_{ij}B_{\pi(i)\pi(j)}$, which eschews the many sets, graph constructions, and functions introduced in \cite{cullina2018partial}. Rather, it relies on elementary algebra and recurrent sequences, which we would argue are easier to apprehend and use. \ghh{In this spirit, we are interested to note that some of the computational techniques developed in this paper have come in useful in more recent publications. For example, Lemma~\ref{prop:arithm} and an analog to Lemma \ref{th:sequence} from this paper are leveraged by \cite{wu2021settling} in their proof of the positive part of their Theorem 4 establishing the sharp threshold for exact recovery}.

The conditions in Theorem \ref{th:ub} merit a brief discussion. The first condition is the key condition here as it confirms that partial recovery is feasible in the $nqs=\Theta(1)$ regime. It should be noted though that, as our goal was to establish this latter point, we did not make a particular effort to drive the constant that appears in this condition down to its smallest value. This could be the subject of future work. The second condition amounts to requiring a stronger correlation between pairs $(A_{ij},B'_{ij})$ than being simply positive: it is automatically satisfied if $q=o(s)$. The third and fourth conditions are sparsity conditions and it can be observed that there is a trade-off between how sparse the graph is and how large $nqs$ should be. It may be that these last conditions are artefacts of the analysis rather than the algorithm in itself: in other words, it may be possible to remove them with a more nuanced analysis.

\gh{
\subsection{Discussion and future directions} \label{subsec:discussion}

\paragraph{Comparison of the possibility and impossibility results in the partial alignment case.} We contrast the necessary and sufficient conditions obtained in this paper. To do so, we make use of Corollary \ref{corr:lb.nqs} and Theorem \ref{th:ub}, which are easier to compare. We remark that in the case where $n^{-k} \leq q \ll n^{-1/2}$ for some $k>0$, and conditions $s> \frac{8}{1-\alpha}q$ and $2q\frac{1-s^2}{s} \leq n^{-\beta}$ for some $\beta>0$ hold, the remaining conditions ($nqs$ larger than a constant and $nqs=o(1)$) nearly partition the space. 

\paragraph{Relevance of the results in practice.} As noted above, partial recovery is a weaker form of recovery than exact or almost exact recovery, as it returns an alignment which only partially coincides with the true alignment. The counterpart to this is that it can be achieved under weaker conditions, namely when $nqs >C$ where $C$ is some constant (together with some other more technical constraints as given in Theorem \ref{th:ub}). Two particular regimes, which are of possible interest in applications, are covered by these weaker conditions. The first one is the case where the graphs $G_A$ and $G_B$ are sparse (e.g., $q \sim 1/n$ and $s=\Theta(1)$). Sparse graphs are quite prevalent in real-world applications, particularly as models of social networks and recommender databases, and thus our results can be use to tackle problems on these graphs, such as e.g. the problem of anonymizing/de-anonymizing these structures. The second case corresponds to a setting where the graphs are denser (e.g., $q \sim \frac{\log(n)}{n}$) but we wish to allow for more noise (e.g.  $1-s \sim 1-\frac{1}{\log(n)}$). This can be relevant, for example, when aligning Protein-Protein Interaction graphs for species whose most recent common ancestor was more distant.

\paragraph{Future research directions.} As discussed above, there is a gap between the upper and lower bounds given in this paper. Obtaining matching upper and lower bounds would thus be of great future interest, as would be a polynomial-time algorithm which would recover a partial alignment under similar assumptions to the exponential one in this paper. Another interesting research direction would be to study how one can leverage $m$ correlated Erd\"os-Renyi graphs, rather than 2, to obtain a permutation $\pi$ which overlaps with $\pi^*$ on a larger fraction of nodes than what is currently guaranteed under the conditions of Theorem \ref{th:ub}. Finally, the techniques developed in this paper to analyze sums of correlated random variables may be more widely applicable.
	
}

\section{Impossibility result}\label{sec:imp}

In this section, we give the proof of Theorem \ref{th:lower.bd}. We start off with some preliminaries in Section \ref{subsec:prem.lb} before moving onto the core of the proof in Section \ref{subsec:proof.lb}.

\subsection{Preliminaries: Fano's generalized inequality and permutation results} \label{subsec:prem.lb} 

 The proof of the impossibility result relies on a generalization of Fano's inequality which appears in e.g. \cite{santhanam2012information}. The formulation we use here comes from \cite{bhm}. This generalization differs from the classical Fano's inequality in the sense that it uses a wider definition of what a successful learning algorithm constitutes. In the classical Fano's inequality, an algorithm is unsuccessful if there is one difference or more between the ground truth and the algorithm output. In its generalization, we allow for some amount of differences before classifying the algorithm as unsuccessful. 
 
 It should be noted that the presentation of the theorem in \cite{bhm} is tailored quite specifically to a privatization setting. We rewrite the theorem here in fuller generality. With the assumptions we take, the proof of the theorem that is given in \cite{bhm} goes through exactly and hence we do not repeat it here.
 
 \begin{theorem}[Lemma 20 in \cite{bhm}] \label{th:gen.Fano}
 	Let $\mathcal{H}$ be a hypothesis class of cardinality $M$ with $Z$ being a hypothesis drawn uniformly at random from $\mathcal{H}$. Let $\hat{X}$ and $\hat{Z} \in \mathcal{H}$ be random variables defined in such a way that 
 	$Z \longrightarrow \hat{X} \longrightarrow \hat{Z}$
 	constitutes a Markov Chain. We define $d:\mathcal{H} \times \mathcal{H} \rightarrow \mathbb{R}_+$ to be a distance function, and, for a given $d>0$, we say that a learner is successful if 
 	$d(Z,\hat{Z})<d.$
 	For any $h \in \mathcal{H}$, we further define $B_d(h)=\{h' \in \mathcal{H}~|~d(h,h') \leq d\}$ and we let $M_d=\max_{h \in \mathcal{H}} |B_d(h)|.$ Then, we have
 	$$\p[d(\hat{Z},Z)>d] \geq 1-\frac{I(Z;\hat{X})+1}{\log(M/M_d)},$$
 	where $I(Z;\hat{X})$ is the mutual information between the pair $Z$ and $\hat{X}$.
 	In other words, for any learning algorithm, the average error probability will be greater than $1-\frac{I(Z;\hat{X})+1}{\log(M/M_d)}$. 
 \end{theorem}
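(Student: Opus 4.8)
This is a generalized Fano inequality, and I would prove it by the classical entropy-counting argument, modified so that the set of estimates counted as ``correct'' for a given $\hat Z$ is the metric ball $B_d(\hat Z)$ rather than the single point $\hat Z$. Write $p_e \defeq \p[d(Z,\hat Z) > d]$ for the quantity to be lower-bounded (using symmetry of $d$ to identify it with $\p[d(\hat Z,Z)>d]$), and let $\mathbf 1_e$ be the $\{0,1\}$-indicator of the event $\{d(Z,\hat Z) > d\}$. The plan is to sandwich the conditional entropy $H(Z \mid \hat Z)$ between a lower bound coming from $Z$ being uniform on $\mathcal H$ together with the Markov chain, and an upper bound coming from the ball sizes $M_d$, and then to read off the claim by comparing the two.

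For the upper bound, since $\mathbf 1_e$ is a deterministic function of $(Z,\hat Z)$ the chain rule gives $H(Z\mid\hat Z) = H(\mathbf 1_e\mid\hat Z) + H(Z\mid\hat Z,\mathbf 1_e)$. The first term is at most $H(\mathbf 1_e)\le 1$ because $\mathbf 1_e$ is binary. For the second, condition on the value of $\mathbf 1_e$: on $\{\mathbf 1_e = 0\}=\{d(Z,\hat Z)\le d\}$ the variable $Z$ lies in $B_d(\hat Z)$, a set of at most $M_d$ elements, so that conditional entropy is $\le\log M_d$; on $\{\mathbf 1_e = 1\}$ one only knows $Z\in\mathcal H$, contributing $\le\log M$. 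Averaging, $H(Z\mid\hat Z,\mathbf 1_e)\le(1-p_e)\log M_d + p_e\log M$, hence
$$H(Z\mid\hat Z)\ \le\ 1 + \log M_d + p_e\,\log(M/M_d).$$
For the lower bound, uniformity of $Z$ gives $H(Z)=\log M$, so $H(Z\mid\hat Z)=\log M - I(Z;\hat Z)$, and the data-processing inequality applied to $Z\to\hat X\to\hat Z$ gives $I(Z;\hat Z)\le I(Z;\hat X)$, so $H(Z\mid\hat Z)\ge\log M - I(Z;\hat X)$. Combining the two bounds, $\log M - I(Z;\hat X)\le 1 + \log M_d + p_e\log(M/M_d)$, i.e. $\log(M/M_d) - I(Z;\hat X) - 1\le p_e\,\log(M/M_d)$; dividing by $\log(M/M_d)>0$ yields $p_e\ge 1 - \dfrac{I(Z;\hat X)+1}{\log(M/M_d)}$, which is exactly the assertion.

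I do not expect a genuine obstacle here --- this is a routine adaptation of Fano's inequality --- but two points need care. First, the bound $H(Z\mid\hat Z,\mathbf 1_e = 0)\le\log M_d$ relies on $M_d$ being the \emph{maximum} of $|B_d(h)|$ over all centers $h\in\mathcal H$, since $\hat Z$ is itself random; this is also where the (harmless) requirement $M_d<M$, so that $\log(M/M_d)>0$, enters. Second, one should note that the event used as ``non-error'' in the counting step, $\{d(Z,\hat Z)\le d\}$, is the exact complement of $\{d(Z,\hat Z)>d\}$ and only a (possibly strict) superset of the ``successful'' event $\{d(Z,\hat Z)<d\}$, so the argument controls precisely the probability $\p[d(\hat Z,Z)>d]$ stated; and the stray ``$+1$'' comes solely from $H(\mathbf 1_e)\le 1$, so it survives regardless of the base of the logarithm used throughout.
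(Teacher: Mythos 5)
Your proof is correct. Note, though, that the paper itself does \emph{not} reproduce a proof of this lemma: it cites Lemma~20 of \cite{bhm} and simply asserts that ``the proof of the theorem that is given in \cite{bhm} goes through exactly and hence we do not repeat it here.'' So there is no in-paper argument to compare yours against; what you have done is supply the omitted standard argument, and it is the one \cite{bhm} (and essentially every reference on generalized Fano inequalities) gives: decompose $H(Z\mid\hat Z)$ via the binary error indicator, bound the conditional entropy on the non-error event by $\log M_d$ using the maximal ball size, bound it by $\log M$ on the error event, apply data processing across $Z\to\hat X\to\hat Z$ on the other side, and solve for $p_e$. Your two cautionary remarks are also well placed: the theorem's conclusion is written for $\{d(\hat Z,Z)>d\}$ while ``success'' is defined by strict inequality $d(Z,\hat Z)<d$, so the bound is formally on a probability that may be slightly smaller than the failure probability; and one does need $M_d<M$ for the stated bound to be meaningful, which in the application ($M_\alpha$ versus $n!$) holds. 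In short: correct, complete, and it fills a gap the paper left to a reference.
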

 
 It is quite straightforward to transpose this to our setting, noting that 
 $\pi^* \in \mathcal P_n \longrightarrow (A,B) \longrightarrow \hat{\pi}$
 is a Markov chain and that one can define a distance metric over $P_n \times P_n$ by taking
 $d(\pi,\pi^*)=1-\beta(\pi,\pi^*).$ Given $\alpha>0$, we then say that a learner is successful if $\beta(\pi,\pi^*)>\alpha$. We thus obtain the following corollary.
 
 \begin{corollary}\label{cor:gen.Fano}
 	Let $\pi^*$ be a permutation drawn uniformly from $P_n$ and let $G_A$ and $G_B$ be two graphs generated using $\pi^*$ as described in Section \ref{subsec:GAP}, with adjacency matrices $A$ and $B$. For any learning algorithm (i.e., any algorithm which infers from $A$ and $B$ a permutation $\hat{\pi}$), we have that
 	$$\p(\beta(\pi,\pi^*)<\alpha) \geq1-\frac{I(\pi^*; (A,B))+1}{\log(M/M_{\alpha})},$$
 	where $M=|P_n|$ and $M_{\alpha}=\max_{\pi^* \in P_n} |\{\pi~|~\beta(\pi,\pi^*) \geq \alpha\}|$.
 \end{corollary}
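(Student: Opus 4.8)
The plan is to obtain Corollary~\ref{cor:gen.Fano} as a direct specialization of Theorem~\ref{th:gen.Fano}, so essentially all the work lies in matching up the objects. First I would take the hypothesis class to be $\mathcal{H} = P_n$, so that $M = |P_n|$ (that is, $n!$), and let $Z = \pi^*$, which by assumption is drawn uniformly from $P_n$. I would set $\hat{X} = (A,B)$ and $\hat{Z} = \hat{\pi}$. The one structural fact to verify is that $\pi^* \longrightarrow (A,B) \longrightarrow \hat{\pi}$ is a Markov chain: this holds because any learning algorithm produces $\hat{\pi}$ as a (possibly randomized) function of the observed pair $(A,B)$ alone, with no further access to $\pi^*$, so conditionally on $(A,B)$ the output $\hat{\pi}$ is independent of $\pi^*$.

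Next I would instantiate the distance function of Theorem~\ref{th:gen.Fano} by $d(\pi,\pi') = 1 - \beta(\pi,\pi')$; since $\beta(\pi,\pi') \in [0,1]$ this indeed maps into $\mathbb{R}_+$, as required. Choosing the threshold parameter to be $1-\alpha$, the event $d(\hat{Z},Z) > 1-\alpha$ becomes $1 - \beta(\hat{\pi},\pi^*) > 1-\alpha$, i.e.\ exactly $\beta(\hat{\pi},\pi^*) < \alpha$, which is the event appearing in the corollary. It then remains to identify $M_d$ for this threshold: $B_{1-\alpha}(h) = \{h' \in P_n : 1-\beta(h',h) \le 1-\alpha\} = \{\pi \in P_n : \beta(\pi,h) \ge \alpha\}$, so $M_{1-\alpha} = \max_{h \in P_n} |\{\pi : \beta(\pi,h) \ge \alpha\}|$, which is precisely the quantity $M_\alpha$ in the statement.

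Plugging $M = |P_n|$, $M_\alpha$, $Z = \pi^*$, and $\hat{X} = (A,B)$ into the conclusion $\p[d(\hat{Z},Z) > d] \ge 1 - \frac{I(Z;\hat{X})+1}{\log(M/M_d)}$ of Theorem~\ref{th:gen.Fano} then yields the claimed inequality $\p(\beta(\hat{\pi},\pi^*) < \alpha) \ge 1 - \frac{I(\pi^*;(A,B))+1}{\log(M/M_\alpha)}$.

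Since every step is a substitution, I do not expect a genuine obstacle here; the only points requiring care are (i) checking the Markov property described above, and (ii) tracking strict versus non-strict inequalities so that $\{\beta(\hat{\pi},\pi^*) < \alpha\}$ lines up exactly with $\{d(\hat{Z},Z) > d\}$ rather than with its complement. The substantive analytic work — upper bounding the mutual information $I(\pi^*;(A,B))$ and lower bounding $\log(M/M_\alpha)$ — is what will be carried out afterwards in the proof of Theorem~\ref{th:lower.bd}.
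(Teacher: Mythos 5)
Your proof is correct and follows exactly the same route as the paper: specialize Theorem~\ref{th:gen.Fano} with $\mathcal{H}=P_n$, $Z=\pi^*$, $\hat{X}=(A,B)$, $\hat{Z}=\hat{\pi}$, $d(\pi,\pi')=1-\beta(\pi,\pi')$, and threshold $1-\alpha$, so that the failure event $\{d(\hat{Z},Z)>1-\alpha\}$ becomes $\{\beta(\hat{\pi},\pi^*)<\alpha\}$ and $M_d$ becomes $M_\alpha$. The paper states this transposition in a single sentence while you spell out the Markov-chain check and the identification of $B_d(h)$ explicitly, but the underlying argument is the same.
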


This corollary is the cornerstone of our proof. As can be seen from its statement, it involves the ratio $M/M_{\alpha}$. Before computing this quantity in Proposition \ref{lem:M.alpha}, we first remind the reader of a few properties of permutations, which will come in useful in the proof of Proposition \ref{lem:M.alpha}. 

Recall that the set of permutations over $\{1,\ldots,n\}$, denoted by $P_n$, is a group when equipped with the composition operation $\circ$. Its size is $n!$. As $P_n$ is a group, it follows that if $\pi_1$ and $\pi_2$ are in $P_n$, then $\pi_1 \circ \pi_2$ is again in $P_n$. Likewise, each permutation $\pi$ has an inverse permutation $\pi^{-1}$ which is such that the application of $\pi^{-1} \circ \pi$ (or $\pi \circ \pi^{-1}$) to $\{1,\ldots,n\}$ leaves it unchanged. A notion that will be key in Proposition \ref{lem:M.alpha} is that of \emph{rencontres numbers}. These are the number of permutations over $n$ elements that have $k$ fixed points, i.e., $k$ integers which are mapped to themselves. We refer to the rencontres numbers as $D_{n,k}$ and we use the fact that 
$$D_{n,k} \sim e^{-1} \cdot \frac{n!}{k!}.$$
We are now able to show Proposition \ref{lem:M.alpha}.

\begin{proposition}\label{lem:M.alpha}
	Let $M$ and $M_{\alpha}$ be defined as in Corollary \ref{cor:gen.Fano}. We have that:
	$\frac{M}{M_{\alpha}}=\Omega((n\alpha)!).$
	\end{proposition}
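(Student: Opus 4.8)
The plan is to first use the group structure of $P_n$ to reduce the computation of $M_\alpha$ to a fixed-point count, and then bound the resulting sum of rencontres numbers by a crude geometric-series estimate.

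\textbf{Step 1: Reduction by symmetry.} I would first observe that $M_\alpha$ does not in fact depend on $\pi^*$. Given any $\pi,\pi^*\in P_n$, set $\sigma=\pi\circ(\pi^*)^{-1}$; writing $j=\pi^*(i)$ one checks that $\mathbf{1}_{\pi(i)=\pi^*(i)}=\mathbf{1}_{\sigma(j)=j}$, so that $\beta(\pi,\pi^*)=\beta(\sigma,\mathrm{id})$. Since $\pi\mapsto\pi\circ(\pi^*)^{-1}$ is a bijection of $P_n$, the set $\{\pi\mid\beta(\pi,\pi^*)\geq\alpha\}$ has the same cardinality for every $\pi^*$, namely the number of permutations $\sigma$ of $\{1,\dots,n\}$ with at least $\lceil\alpha n\rceil$ fixed points. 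Hence $M_\alpha=\sum_{k\geq\lceil\alpha n\rceil}D_{n,k}$.

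\textbf{Step 2: Bounding the rencontres sum.} Using $D_{n,k}=\binom{n}{k}D_{n-k,0}$ together with the trivial bound $D_{m,0}\leq m!$ (the number of derangements of $m$ elements is at most the number of all permutations), we get $D_{n,k}\leq \binom{n}{k}(n-k)! = n!/k!$. Therefore
$$M_\alpha \;\leq\; \sum_{k\geq\lceil\alpha n\rceil}\frac{n!}{k!}.$$
In this sum the ratio of the $(k{+}1)$-st term to the $k$-th is $1/(k+1)\leq 1/(\lceil\alpha n\rceil+1)\leq 1/2$ (valid for all $n$ since $\alpha>0$), so the series is dominated by a geometric one and $M_\alpha\leq \dfrac{2\,n!}{\lceil\alpha n\rceil!}$.

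\textbf{Step 3: Conclusion.} Since $M=|P_n|=n!$, dividing gives $\dfrac{M}{M_\alpha}\geq \dfrac{1}{2}\lceil\alpha n\rceil! = \Omega\big((n\alpha)!\big)$, which is the claim.

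I do not expect a genuine obstacle here: the symmetry reduction is elementary and the only quantitative input needed is the crude inequality $D_{n,k}\leq n!/k!$, which already follows from $D_{m,0}\le m!$ without invoking the sharper asymptotics $D_{m,0}\sim e^{-1}m!$. The mildest point of care is bookkeeping with the ceiling $\lceil\alpha n\rceil$ when $\alpha$ depends on $n$, and making sure the geometric-series bound holds uniformly in $n$ (it does, since $\lceil\alpha n\rceil\ge 1$ forces the ratio to be at most $1/2$).
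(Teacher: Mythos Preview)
Your proof is correct and follows essentially the same route as the paper: reduce by symmetry to counting permutations with at least $\lceil\alpha n\rceil$ fixed points, write $M_\alpha=\sum_{k\ge\lceil\alpha n\rceil}D_{n,k}$, bound $D_{n,k}$ by $n!/k!$, and control the tail of $\sum 1/k!$. Your use of the exact inequality $D_{n,k}\le n!/k!$ together with a geometric-series bound is, if anything, slightly cleaner than the paper's version, which passes through the asymptotic $D_{n,k}\sim e^{-1}n!/k!$ and a Taylor-remainder estimate for $e$.
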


\begin{proof}{\gh{Proof.}}
	As $\pi^*$ and $\pi$ are in $P_n$, then $\tilde{p}\mathrel{\mathop{:}}=\pi^* \circ \pi$ is again a  permutation and the overlap between $\pi^*$ and $\pi$ can be characterized via the fixed points of $\tilde{p}$:
	\begin{align}\label{eq:charac.beta}
	\beta(\pi,\pi^*)=\frac{1}{n} \sum_{i=1}^n \textbf{1}_{\pi(i)=\pi^*(i)}=\frac{1}{n} \sum_{i=1}^n \textbf{1}_{\tilde{p}(i)=i},
	\end{align}
	i.e., the overlap between $\pi^*$ and $\pi$ is simply the number of fixed points of $\tilde{p}$ (divided by $n$). From this it follows that 
$$|\{\pi\in P_n~|~\beta(\pi,\pi^*) \geq \alpha\}|=|\{\tilde{p} \in P_n ~|~ \sum_{i=1}^n \textbf{1}_{\tilde{p}(i)=i} \geq \lceil n\alpha \rceil\}|=\sum_{k=\lceil n \alpha\rceil}^{n} D_{n,k}.$$
Note that this value does not change with $\pi^*$, and so 
$$M_{\alpha}=\sum_{k=\lceil n \alpha\rceil}^{n} D_{n,k}\sim \sum_{k=\lceil n \alpha\rceil}^{n} e^{-1} \frac{n!}{k!}.$$
Using Taylor's theorem on $e^1$ with e.g. the mean-value form of the remainder, we get that $$\sum_{k=\lceil n\alpha \rceil}^n \frac{1}{k!}=O\left( \frac{1}{\lceil n \alpha \rceil !} \right)=O\left( \frac{1}{ (n \alpha)! } \right)$$ which implies that $M_{\alpha}=O(n!/(n\alpha)!)$ and that $\frac{M}{M_{\alpha}}=\Omega((n\alpha)!)$.
\end{proof}

\subsection{Proof of Theorem \ref{th:lower.bd}}\label{subsec:proof.lb}

 We now give the proof of Theorem \ref{th:lower.bd}. At a high level, it involves tightly bounding each quantity that appears in Corollary \ref{cor:gen.Fano}.
 
 \begin{proof}{Proof of Theorem \ref{th:lower.bd}.}  	
 We use Corollary \ref{cor:gen.Fano} to find conditions on $n,q,$ and $s$ under which $\p(\beta(\pi,\pi^*)<\alpha)=1-o(1)$. From Proposition \ref{lem:M.alpha}, we have that
 $\frac{M}{M_{\alpha}}=\Omega((n\alpha)!)$ and so, using Stirling's formula, we deduce that  	$\log(M/M_{\alpha}) = \Omega (n\alpha \log(n\alpha)).$ 	It now remains to compute an upper bound on $I(\pi^*,(A,B))$ to conclude. \gh{Recall that $\pi^*$ is selected uniformly at random and that $G_A$ and $G_B$ are two graphs generated via the correlated Erd\"os-R\'enyi model $(n,q,s)$. Let $\tilde{\pi} \in P_n$ and $\tilde{A}, \tilde{B}$ be the adjacency matrices of two undirected graphs on $n$ nodes. Following standard information inequalities (see, e.g., \cite{cover2012elements}), we have:
 \begin{align*}
 I(\pi^*; (A,B))=D\left(\mathbb{P}(\pi^*=\tilde{\pi}, (A,B)=(\tilde{A},\tilde{B}))~||~\mathbb{P}(\pi^*=\tilde{\pi})\cdot \mathbb{P}\left((A,B)=(\tilde{A},\tilde{B})\right)\right).
 \end{align*}
 As 
 $\mathbb{P}\left(\pi^*=\tilde{\pi},(A,B)=(\tilde{A},\tilde{B})\right)=\mathbb{P}\left( (A,B)=(\tilde{A},\tilde{B})~|~ \pi^*=\tilde{\pi}\right) \cdot \mathbb{P}(\pi^*=\tilde{\pi}),$
 it follows that:
 \begin{small}
 \begin{align*}
 I(\pi^*; (A,B))=\sum_{\tilde{\pi}} \mathbb{P}(\pi^*=\tilde{\pi}) \cdot \sum_{(\tilde{A},\tilde{B})} \mathbb{P}\left( (A,B)=(\tilde{A},\tilde{B})~|~ \pi^*=\tilde{\pi}\right)  \log \left( \frac{\mathbb{P}\left( (A,B)=(\tilde{A},\tilde{B})~|~ \pi^*=\tilde{\pi}\right)}{\mathbb{P}\left((A,B)=(\tilde{A},\tilde{B})\right)}\right)
 \end{align*}
 \end{small}
Letting $\mathcal{P}$ be the joint distribution of $(A,B)$ under the correlated Erd\"os-R\'enyi model $(n,q,s)$ and $\mathcal{Q}$ be the joint distribution of adjacency matrices $(A,B)$ of two independent Erd\"os-R\'enyi graphs, $G_A$ and $G_B$, of parameters $(n,q)$, we write:
\begin{small}
\begin{align}
I(\pi^*; (A,B))&=\mathbb{E}_{\pi^*} \left[ \sum_{(\tilde{A},\tilde{B})} \mathbb{P}\left( (A,B)=(\tilde{A},\tilde{B})~|~ \pi^*=\tilde{\pi}\right) \cdot \log \left( \frac{\mathbb{P}\left( (A,B)=(\tilde{A},\tilde{B})~|~ \pi^*=\tilde{\pi}\right)}{\mathcal{Q}(\tilde{A},\tilde{B})}\cdot \frac{\mathcal{Q}(\tilde{A},\tilde{B})}{\mathcal{P}(\tilde{A},\tilde{B})}\right) \right] \nonumber \\
&=\mathbb{E}_{\pi^*} \left[ \sum_{(\tilde{A},\tilde{B})} \mathbb{P}\left( (A,B)=(\tilde{A},\tilde{B})~|~ \pi^*=\tilde{\pi}\right) \cdot \log \left( \frac{\mathbb{P}\left( (A,B)=(\tilde{A},\tilde{B})~|~ \pi^*=\tilde{\pi}\right)}{\mathcal{Q}(\tilde{A},\tilde{B})} \right) \right] \nonumber \\
&-\sum_{(\tilde{A},\tilde{B})} \mathcal{P}\left( \tilde{A},\tilde{B} \right) \cdot \log \left( \frac{\mathcal{P}(\tilde{A},\tilde{B})}{\mathcal{Q}(\tilde{A},\tilde{B})}\right) \nonumber \\
&\leq \mathbb{E}_{\pi^*} \left[ D\left( \mathbb{P}\left( (A,B)=(\tilde{A},\tilde{B})~|~ \pi^*=\tilde{\pi}\right) ~||~ \mathcal{Q}(\tilde{A},\tilde{B}) \right) \right], \label{eq:proof.imposs}
\end{align}
\end{small}
where we have used the total law of probability in the second equality and the fact that 
$$\sum_{(\tilde{A},\tilde{B})} \mathcal{P}\left( \tilde{A},\tilde{B} \right) \cdot \log \left( \frac{\mathcal{P}(\tilde{A},\tilde{B})}{\mathcal{Q}(\tilde{A},\tilde{B})}\right) =D(\mathcal{P}||\mathcal{Q}) \geq 0$$
in the inequality. Now, note that
\begin{align*}
\mathbb{P}\left( (A,B)=(\tilde{A},\tilde{B})~|~ \pi^*=\tilde{\pi}\right) &=\mathbb{P} \left( (A_{12},B_{\tilde{\pi}(1) \tilde{\pi}(2)})=(\tilde{A}_{12},\tilde{B}_{\tilde{\pi}(1)\tilde{\pi}(2)}), \ldots ~|~ \pi^*=\tilde{\pi}  \right)\\
&= P\left(\tilde{A}_{12},\tilde{B}_{\tilde{\pi}(1)\tilde{\pi}(2)}\right) \ldots P\left(\tilde{A}_{\tilde{\pi}(n-1)\tilde{\pi}(n)},\tilde{B}'_{\tilde{\pi}(n-1)\tilde{\pi}(n)}\right)
\end{align*}
and $\mathcal{Q}\left(\tilde{A},\tilde{B}\right)=Q\left(\tilde{A}_{12},\tilde{B}_{12}\right) \ldots Q\left(\tilde{A}_{(n-1)n},\tilde{B}_{(n-1)n}\right)$. Using these results in (\ref{eq:proof.imposs}), we obtain:
\begin{align*}
I(\pi^*; (A,B)) \leq \binom{n}{2} D(P||Q),
\end{align*}
from whence, $\p(\beta(\pi,\pi^*)<\alpha) \geq 1-\Omega \left(\frac{\binom{n}{2}D(P||Q)+1}{n\alpha \log(n\alpha)} \right).$
 	Hence, if $$\frac{D(P||Q)}{\alpha} = o \left(\frac{\log n}{n}\right),$$ it follows that $\p(\beta(\pi,\pi^*)<\alpha)=1-o(1)$.}
 \end{proof}

\gh{
\subsection{Proof of Corollary \ref{corr:lb.nqs}}
We provide a short proof of Corollary \ref{corr:lb.nqs} in this section.

\begin{proof}{Proof of Corollary \ref{corr:lb.nqs}.}
	We make use of the following reverse Pinsker inequality (see, e.g., \cite[Section 6.C]{sason2016f}): for two probability measures $P$ and $Q$  defined on a common finite set $\mathcal{A}$, and $Q$ is strictly positive on $\mathcal{A}$, which has more than one element, we have
	$$D(P||Q) \leq \log \left( \frac{1}{Q_{min}}\right) \cdot |P-Q|,$$
	where $|\cdot|$ is the total variation between $P$ and $Q$, and $Q_{min}=\min_{a \in \mathcal{A}}Q(a)\leq \frac12.$ In our setting, we have that $Q_{min}=q^2 \leq \frac12$ for large enough $n$, and 
	$|P-Q|=q(s-q).$
	We now show that, under the conditions given in (\ref{conditions.corr}), 
	$$\frac{D(P||Q)}{\alpha}=o \left( \frac{\log(n)}{n}\right),$$
	which concludes the proof. We have
	\begin{align*}
	\frac{D(P||Q)}{\alpha} \cdot \frac{n}{\log(n)} \leq \log \left( \frac{1}{Q_{min}}\right) \cdot |P-Q|\cdot \frac{n}{\alpha\log(n)}=\log \left( \frac{1}{q^2}\right) \cdot \frac{nqs-nq^2}{\alpha \log(n)}.
	\end{align*}
	It is easy to see using (\ref{conditions.corr}) that $\log \left( \frac{1}{q^2} \right)=O(\log(n))$, that $nqs=o(1)$, and that $nq^2=o(1)$. Thus, 
	$$\frac{D(P||Q)}{\alpha} \cdot \frac{n}{\log(n)} \leq O(\log(n)) \cdot \frac{o(1)}{\alpha \log(n)}=o(1).$$
\end{proof}

}

\section{Possibility result}\label{sec:ub}

In this section, we give a proof of Theorem \ref{th:ub}. To do this, we need to provide an algorithm that outputs a permutation $\pi$ which overlaps with $\pi^*$ over at least $n\alpha$ vertices whp. In our case, the algorithm involves enumerating each permutation and checking whether it is \emph{good} (see Definition \ref{def:good.perm}). We give the proof that this algorithm terminates whp (i.e., that there exists a permutation that is good whp) and that any good permutation must overlap with $\pi^*$ on at least $n\alpha$ vertices whp in Section \ref{subsec:proofs.ub}. Before getting to that however, we briefly discuss, in Section \ref{subsec:prelim.ub}, some properties of permutations and $k$-cores of graphs which will come in useful in the proofs of Section \ref{subsec:proofs.ub}.

\subsection{Preliminaries: $k$-cores and more permutation results}\label{subsec:prelim.ub}

\subsubsection{Definition and size of a $k$-core}

A $k$-core of a graph $G$ is the largest induced subgraph of $G$ with minimum degree at least $k$. (Note that it may be empty.) The $k$-core is unique and can simply be obtained by iteratively removing all nodes of degree less than $k$. It is easy to see from this definition that if a graph has a $k$-core of size $N$ (i.e., the $k$-core contains $N$ nodes) then $G$ has at least $N$ nodes of degree greater than or equal to $k$. Conditions under which a $k$-core in an Erd\H{o}s-R\'enyi$(n,p)$ graph is guaranteed to be non-empty are known: in fact, if $k\geq 3$ and the $k$-core is non-empty, then it contains a positive fraction of the vertices of the graph. We give the exact statement of this result here as it will be useful to us later on. 

\begin{lemma}[Theorem 2 in \cite{pittel1996sudden}]\label{lem:k,core}
	Let $\mu > 0$ and let $Po(\mu)$ denote a Poisson random variable of parameter $\mu$. We define 
	$$\psi_j(\mu)=\p(Po(\mu)\geq j)=\sum_{i=j}^{\infty}\frac{\mu^i e^{-\mu}}{i!}, \text{ for an integer }j\geq 0.$$
	We further let $k\geq 3$ and take
	$$c_k\mathrel{\mathop{:}}=\inf_{\mu>0}\frac{\mu}{\psi_{k-1}(\mu)}.$$
	For $\lambda \geq c_k$, define $\mu_k(\lambda)>0$ to be the largest solution to $\frac{\mu}{\psi_{k-1}(\mu)}=\lambda.$ Now, consider the random graph $G(n, \lambda/n)$ where $\lambda$ is fixed. If $\lambda>c_k$, then whp the $k$-core of $G$ contains a fraction $\psi_k(\mu_k(\lambda))$ of the nodes.
\end{lemma}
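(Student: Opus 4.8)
The plan is to analyze the \emph{stripping} process that produces the $k$-core: starting from $G = G(n,\lambda/n)$, repeatedly delete in rounds every vertex whose current degree is below $k$ until none remain. The point is that ``surviving the first $t$ rounds of stripping'' depends only on the $t$-neighbourhood of a vertex, and $G(n,\lambda/n)$ converges in the local weak sense to the Galton--Watson tree with $\mathrm{Poisson}(\lambda)$ offspring; so for each fixed $t$ the number of vertices surviving $t$ rounds concentrates around $n\,p_t$, where $p_t$ is the probability that the root of the limiting tree survives $t$ rounds. The first step is therefore to compute $\lim_{t\to\infty}p_t$ via a branching-process recursion.

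To set up the recursion, let $\pi_t$ be the probability that a \emph{non-root} vertex of the tree survives $t$ rounds. Such a vertex spends one incident edge on its parent, so it survives round $t$ iff at least $k-1$ of its children survive round $t-1$; as the number of children is $\mathrm{Poisson}(\lambda)$ and each survives independently with probability $\pi_{t-1}$, Poisson thinning gives $\pi_t = \psi_{k-1}(\lambda\pi_{t-1})$ with $\pi_0 = 1$. Since $x\mapsto\psi_{k-1}(\lambda x)$ is increasing and $\le 1$ at $x=1$, the sequence $(\pi_t)$ is non-increasing and converges to the largest fixed point $\pi_\infty$ of $x=\psi_{k-1}(\lambda x)$. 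Writing $\mu := \lambda\pi_\infty$, the fixed-point equation becomes $\mu/\psi_{k-1}(\mu)=\lambda$, i.e. $\mu=\mu_k(\lambda)$; and the root lies in the $k$-core iff it has at least $k$ surviving children, which has probability $\psi_k(\lambda\pi_\infty)=\psi_k(\mu_k(\lambda))$. The hypothesis $\lambda>c_k=\inf_{\mu>0}\mu/\psi_{k-1}(\mu)$ is exactly what makes $\mu=\lambda\psi_{k-1}(\mu)$ have a positive solution (for $k\ge 3$, $\psi_{k-1}$ vanishes to order $\ge 2$ at $0$, so below $c_k$ the only fixed point is $0$) and places $\mu_k(\lambda)$, the larger root on which the decreasing iteration must land, strictly above $0$, so that $\psi_k(\mu_k(\lambda))>0$.

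The upper bound on the core size is then immediate: for every fixed $t$, the $k$-core is contained in the set of vertices surviving $t$ rounds, which has size $n(p_t+o(1))$ whp (a bounded-difference or second-moment estimate applies because the event is $t$-local), and letting $t\to\infty$ gives that whp the $k$-core has at most $(\psi_k(\mu_k(\lambda))+o(1))n$ vertices. The real work is the matching lower bound, i.e. ruling out that the stripping continues to erode a positive fraction of vertices in all rounds beyond a fixed horizon. I would establish this by the differential-equation method applied to the ``remove one below-degree-$k$ vertex at a time'' version of the process: track the vector counting vertices of each current degree, show that after scaling by $n$ it follows a deterministic ODE with a unique solution, and verify that when $\lambda>c_k$ this trajectory reaches the region ``no vertex of degree $<k$ remains'' while a positive fraction of vertices is still present --- it is precisely at $\lambda=c_k$ that the trajectory instead runs all the way down to the empty graph. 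An equivalent route is to run this in the configuration model with near-$\mathrm{Poisson}(\lambda)$ degrees, where stripping becomes a Markov death chain on half-edges and the survival of the core reduces to whether an explicit random walk stays positive up to a deterministic hitting time, then transfer back to $G(n,\lambda/n)$ by contiguity.

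The hard part will be exactly this global control of the stripping dynamics. The local branching-process picture pins down the answer $\psi_k(\mu_k(\lambda))$ and yields the upper bound essentially for free, but it only speaks to a bounded number of rounds; turning it into a lower bound forces one to follow a process of length $\Theta(n)$, to prove concentration uniformly along that whole trajectory, and to extract from the strict inequality $\lambda>c_k$ the quantitative margin that keeps the ODE (equivalently, the random walk) from hitting zero before the low-degree vertices are exhausted. The remaining technical points --- behaviour near the threshold and the passage between $G(n,p)$ and the configuration model --- are routine by comparison.
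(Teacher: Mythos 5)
The paper does not prove this statement; it is quoted as Theorem~2 of Pittel, Spencer and Wormald (1996), so there is no in-paper proof for your attempt to be checked against. Taken on its own terms, your sketch is sound and identifies the correct limiting object. The branching-process half --- the monotone iteration $\pi_t = \psi_{k-1}(\lambda\pi_{t-1})$ descending from $\pi_0 = 1$ to its largest fixed point, yielding $\mu = \lambda\pi_\infty = \mu_k(\lambda)$ and root-survival probability $\psi_k(\mu_k(\lambda))$, together with the whp upper bound on the core size via a second-moment or coupling argument at each fixed depth $t$ --- is correct and essentially routine, and your observation that $\lambda > c_k$ is precisely what gives the fixed-point equation a strictly positive solution is the right one (for $k\ge 3$, $\psi_{k-1}$ vanishes to second order at $0$, so the origin is attracting and only $\lambda>c_k$ creates a nontrivial stable fixed point above it). You are also right that the entire difficulty is the matching lower bound: local weak convergence says nothing about a stripping process of length $\Theta(n)$, and one must prove concentration uniformly along the whole trajectory and extract from the strict inequality $\lambda > c_k$ a quantitative margin keeping the process away from extinction before the low-degree vertices are exhausted. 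That is exactly what Pittel--Spencer--Wormald do, via a vertex-exposure random-walk argument close in spirit to your configuration-model alternative; the differential-equation route you also mention is the one used in several later re-proofs of the same theorem. So your outline is a correct blueprint and in agreement with how the result is actually established, but the decisive step --- global control of the peeling dynamics --- is named rather than carried out, which is unavoidable in a sketch of this length since that step constitutes the bulk of the cited paper.
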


\subsubsection{Additional properties of permutations.}\label{subsec:perms}
As done in Section \ref{sec:imp}, we consider in this section the permutation $p\mathrel{\mathop{:}}=\pi \circ \pi^{*-1}$ and its corresponding permutation matrix $P\mathrel{\mathop{:}}=\Pi \cdot \Pi^{*T}$. Using this notation, we have that $B_{\pi(i)\pi(j)}=B'_{p(i)p(j)}$. Of interest to us will be two sets related to the fixed points of $p$.
\begin{definition}
		Let $\pi^*$ be a permutation in $P_n$ and let $\pi$ be a permutation such that $\beta(\pi,\pi^*)=\epsilon$. For $p=\pi \circ \pi^{*-1}$, define the sets:
	\begin{align}\label{def:sets}
	S\mathrel{\mathop{:}}=\{(i,j)\in \{1,\ldots,n\}^2~|~i\neq j\},~S_1\mathrel{\mathop{:}}=\{(i,j)\in S ~|~i=p(i)\}, \text{ and }S_2\mathrel{\mathop{:}}=S\backslash S_1
	\end{align}
	and let $|S|$, $|S_1|$ and $|S_2|$ denote their respective cardinalities. It is immediate to see that
	\begin{align*}
	|S|=n(n-1), \quad |S_1|=n\epsilon(n-1), \quad |S_2|=n(1-\epsilon)(n-1).
	\end{align*}
\end{definition}
Note that from these definitions, if $(i,j) \in S_2$, then it cannot be the case that $(i,j)=(p(i),p(j))$; however, $S_2$ does contain all pairs $(i,j)$ such that $(i,j)=(p(j),p(i))$. This leads us to further subdivide the set $S_2$: \gh{let 
$$S_2^1=\{(i,j)\in S_2~|~ (p(i),p(j))=(j,i)\} \text{ and } S_2^2=S_1 \backslash S_2^1.$$
We also introduce a set $\bar{S}_2^2$, containing \emph{unordered pairs},
$$\bar{S}_2^2\mathrel{\mathop{:}}=\left\{ \{i,j\}\in \{1,\ldots,n\}^2~|~ i\neq j,~ \{i,j\}\neq \{p(i),p(j)\}\right\},$$ 
and related to $S_2^2$ in the sense that $S_2^2 \subseteq \{ (i,j) \in S~|~ (i,j) \neq (p(i),p(j)),~(i,j) \neq (p(j),p(i))\},$ which is the ordered counterpart of $\bar{S}_2^2$. 
It is easy to see that 
\begin{align}\label{eq:size.S21.S22}
|S_2^1|\leq n(1-\epsilon) \text{ and }|\bar{S}_2^2|\leq \binom{n}{2}- \binom{n\epsilon}{2}\leq \frac{n^2}{2}(1-\epsilon^2).
\end{align}
We now define the concept of a \emph{cycle}, which will play an important role in Section \ref{subsec:proofs.ub}. A cycle $C$ is a set of unordered pairs $\{\{i,j\}\}$ such that if $\{i,j\} \in C$, for any other element $\{k,l\} \in C$, there exists $s \in \mathbb{N}$ such that $\{k,l\}=\{p^s(i), p^s(j)\}$. It is not hard to see that the cycle relationship between $\{i,j\}$ and $\{k,l\} \in \bar{S}_2^2$ defines an equivalence relationship over $\bar{S}_2^2$. Thus, $\bar{S}_2^2$ can be partitioned into cycles. It naturally follows that a pair $\{i,j\}$ appears in at most one cycle and only once in that cycle. Furthermore, by definition of $\bar{S}_2^2$, all cycles in $\bar{S}_2^2$ must be of length greater or equal to 2. With this in mind, we use the notation $\mathcal{C}_k$ to denote the set of cycles in $\bar{S}_2^2$ of size $k$ and $\ell_k$ to denote the number of cycles of size $k$ belonging to $\bar{S}_2^2$. We have $2\leq k \leq |\bar{S}|_2^2$,}
\begin{align}\label{eq:partition.S22}
\gh{\bar{S}_2^2=\cup_{k=2}^{|\bar{S}_2^2|} \mathcal{C}_k \text{ and } |\bar{S}_2^2|=\sum_{k=1}^{|\bar{S}_2^2|} k \cdot \ell_k.}
\end{align}

\subsection{Proof of Theorem \ref{th:ub}}\label{subsec:proofs.ub}

We prove Theorem \ref{th:ub} in two steps. First, we define the notion of a \emph{good permutation} and we show that under the conditions of Theorem \ref{th:ub}, whp $\pi^*$ is a good permutation (Proposition \ref{lem:alg.term}). The algorithm then involves enumerating all permutations until we obtain a good permutation: this will happen with high probability as one at least, $\pi^*$, exists. Secondly, we show that, under the conditions of Theorem \ref{th:ub}, any good permutation returned is guaranteed to overlap with $\pi^*$ on a fraction $\alpha$ of its nodes (Proposition \ref{prop:overlap}). These two propositions together imply the conclusions of Theorem \ref{th:ub}.

\begin{definition} \label{def:good.perm}
	Let $0<\alpha<1$. A permutation $\pi$ is said to be \emph{good} if 
	$$\sum_{i=1}^n \textbf{1}_{ \left\{\sum_{j \neq i} A_{ij}B_{\pi(i)\pi(j)}\geq\frac{nqs}{2}\right\} }\geq n \cdot \frac{1+\alpha}{2},$$
	where $\textbf{1}$ denotes the indicator function.
\end{definition}
In other words, if we look at the intersection graph of $G_A$ and $G_B$ to which $\pi$ has been applied, there are at least $n\cdot \frac{1+\alpha}{2}$ nodes that have degree greater than or equal to $\frac{nqs}{2}.$

\begin{proposition}\label{lem:alg.term}
	For any $(A,B)$ generated as described in Section \ref{sec:math.model} and under the conditions given in Theorem \ref{th:ub}, $\pi^*$ is whp a good permutation.
\end{proposition}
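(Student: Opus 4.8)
The plan is to show that, under $\pi = \pi^*$, the intersection graph of $G_A$ and $G_B$ is exactly an Erd\H{o}s--R\'enyi graph $G(n, qs)$, and then to use Lemma \ref{lem:k,core} to argue that whp a fraction of its vertices strictly larger than $\frac{1+\alpha}{2}$ have degree at least $\frac{nqs}{2}$. First I would observe that when $\pi = \pi^*$ we have $B_{\pi^*(i)\pi^*(j)} = B'_{ij}$, so the quantity $\sum_{j \neq i} A_{ij} B_{\pi^*(i)\pi^*(j)} = \sum_{j \neq i} A_{ij} B'_{ij}$ is the degree of node $i$ in the graph $H$ whose edge $(i,j)$ is present iff $A_{ij} = B'_{ij} = 1$; by (\ref{eq:def.p}) each such edge is present independently with probability $\pii = qs$, so $H \sim G(n, qs)$. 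Thus it suffices to show that whp $G(n,qs)$ has at least $n \cdot \frac{1+\alpha}{2}$ vertices of degree $\geq \frac{nqs}{2}$.

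Next I would reduce this vertex-degree statement to a $k$-core statement so that Lemma \ref{lem:k,core} applies. Set $\lambda := nqs$, which by the first condition of (\ref{eq:conditions.ub}) satisfies $\lambda \geq 20$, and let $k := \lceil nqs/2 \rceil$. If the $k$-core of $G(n,qs)$ is non-empty and contains a fraction $\psi_k(\mu_k(\lambda))$ of the vertices, then every vertex of the $k$-core has degree $\geq k \geq \frac{nqs}{2}$ inside $G(n,qs)$, so it suffices to check that $\psi_k(\mu_k(\lambda)) \geq \frac{1+\alpha}{2}$ (and that $\lambda > c_k$). Here I would use the Poisson heuristic: $\mu_k(\lambda)$ is close to $\lambda$ when $\lambda$ is large relative to $k = \lambda/2$, and $\psi_k(\mu_k(\lambda)) = \p(\mathrm{Po}(\mu_k(\lambda)) \geq k)$; since $k \approx \lambda/2$ is well below the mean $\mu_k(\lambda) \approx \lambda$, this probability is close to $1$ — in fact a Chernoff-type lower tail bound gives $\p(\mathrm{Po}(\mu) \geq \mu/2 + O(1)) \geq 1 - e^{-c\mu}$. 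The condition $nqs \geq 84 \log\!\left(\frac{2}{1-\alpha}\right)$ is precisely what is needed to force this probability above $\frac{1+\alpha}{2}$: one wants $1 - \psi_k(\mu_k(\lambda)) \leq \frac{1-\alpha}{2}$, i.e. $e^{-c \lambda} \leq \frac{1-\alpha}{2}$, i.e. $\lambda \geq \frac{1}{c}\log\frac{2}{1-\alpha}$, and the constant $84$ accommodates the loss from $\mu_k(\lambda) \neq \lambda$ and from $c$. I would also need to verify $\lambda = nqs > c_k$; since $c_k$ grows roughly linearly in $k \approx \lambda/2$ this is not automatic, and I would argue that for the relevant comparison $c_k / k$ stays bounded (indeed $c_k/k \to $ a constant $< 2$ as $k \to \infty$ by the asymptotics of $\psi_{k-1}$), so $\lambda = 2k > c_k$ holds once $\lambda$ is above an absolute constant, which the bound $nqs \geq 20$ secures; alternatively one can bypass $k$-cores entirely and bound the number of low-degree vertices directly by a first-moment argument, which I sketch as a fallback below.

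The main obstacle is the quantitative control of $\psi_k(\mu_k(\lambda))$ and $c_k$ for $k = \Theta(nqs)$ growing with the parameters — Lemma \ref{lem:k,core} is stated for fixed $k$ and fixed $\lambda$, whereas here both scale together. I expect the cleanest route is actually to avoid this regime-mismatch: instead of invoking the $k$-core lemma, directly estimate $N_{\mathrm{low}} := \sum_{i=1}^n \mathbf{1}_{\{\deg_H(i) < nqs/2\}}$ in $H \sim G(n,qs)$. Since $\deg_H(i) \sim \mathrm{Bin}(n-1, qs)$ has mean $\approx nqs$, a Chernoff bound gives $\p(\deg_H(i) < nqs/2) \leq e^{-nqs/8}$, so $\E[N_{\mathrm{low}}] \leq n e^{-nqs/8}$. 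The condition $nqs \geq 84\log\frac{2}{1-\alpha}$ does not by itself make $\E[N_{\mathrm{low}}]$ small compared to $n$ (it makes $e^{-nqs/8}$ small compared to $\frac{1-\alpha}{2}$ up to the constant), so by Markov's inequality $\p(N_{\mathrm{low}} \geq \frac{1-\alpha}{2} n) \leq \frac{2 e^{-nqs/8}}{1-\alpha} \to 0$ provided additionally $nqs \to \infty$; in the borderline $nqs = \Theta(1)$ case one needs the sharper constant and a second-moment (or Poissonization / negative-association) argument to get concentration of $N_{\mathrm{low}}$ around its mean and then a careful choice of the constant $84$ so that $\E[N_{\mathrm{low}}]/n$ is bounded below $\frac{1-\alpha}{2}$. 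Either way, once whp $N_{\mathrm{low}} < \frac{1-\alpha}{2} n$, the number of vertices of $H$ with degree $\geq nqs/2$ is $> n - \frac{1-\alpha}{2} n = \frac{1+\alpha}{2} n$, which is exactly the condition in Definition \ref{def:good.perm} for $\pi^*$; hence $\pi^*$ is good whp.
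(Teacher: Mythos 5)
Your primary route is the same as the paper's: identify the intersection graph of $G_A$ and $G_{B'}$ as $G(n,qs)$, reduce goodness of $\pi^*$ to the existence of a large $\lceil nqs/2\rceil$-core, and invoke Lemma~\ref{lem:k,core} by checking (i) $nqs > c_{nqs/2}$ and (ii) $\psi_{nqs/2}(\mu_{nqs/2}(nqs)) \geq \frac{1+\alpha}{2}$. You have correctly set up these two checks, but you have not actually carried either out, and this is where the substance of the paper's proof lies. You gesture at ``$c_k/k$ stays bounded below $2$ by the asymptotics of $\psi_{k-1}$'' and ``the Poisson heuristic $\mu_k(\lambda)\approx\lambda$'', but the argument needs a \emph{uniform, quantitative} lower bound on $\psi_{\lambda/2-1}(2\lambda/3)$ valid for every $\lambda\geq 20$, not an asymptotic statement. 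The paper supplies exactly this tool in Lemma~\ref{lem:lb.psi}, derived from the Berry--Esseen theorem: $\psi_{\lambda/2-1}(2\lambda/3)\geq 0.7$ for all $\lambda \geq 20$. That single bound is what lets one conclude $c_{nqs/2}\leq (2nqs/3)/0.7 < nqs$ (giving (i)) and also sandwich $\mu_{nqs/2}(nqs)\in[2nqs/3,\,nqs]$, after which a Poisson Chernoff bound yields $\psi_{nqs/2}(\mu_{nqs/2}(nqs))\geq 1 - e^{-nqs/84}\geq\frac{1+\alpha}{2}$. Without something like Lemma~\ref{lem:lb.psi} your constants remain floating, and ``the $84$ accommodates the loss'' is not a proof.

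Your fallback --- bounding $N_{\mathrm{low}} = \sum_i \mathbf{1}_{\{\deg_H(i)<nqs/2\}}$ directly by first and second moments --- is a genuinely different and plausibly cleaner route, but you leave it incomplete, and you yourself flag the gap: in the $nqs=\Theta(1)$ regime (the regime that is the whole point of this paper), Markov's inequality on $\E[N_{\mathrm{low}}]$ does not give vanishing probability, so you need concentration of $N_{\mathrm{low}}$ around its mean via a variance computation exploiting the near-independence of degrees in $G(n,qs)$ when $qs=O(1/n)$, plus a sharp enough tail bound on $\mathrm{Bin}(n-1,qs)$ to push $\E[N_{\mathrm{low}}]/n$ strictly below $\frac{1-\alpha}{2}$. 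Neither step is supplied. So: same plan as the paper in your main route, correct reduction to (i) and (ii), but the Berry--Esseen-style uniform Poisson tail bound that actually closes both checks is missing; the alternative route is a good idea but is left as a sketch precisely at the point where the real work is.
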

The proof of this proposition is given in the Electronic Companion to this paper. The key idea is that the intersection graph of $G_A$ and $G_B'$ is Erd\H{o}s-R\'enyi of parameters $(n,qs)$. Hence, one can use the $k$-core results given in Lemma \ref{lem:k,core}. In particular, the proof in the Electronic Companion involves showing that for any $0<\alpha<1$, and under the conditions of Theorem \ref{th:ub}, we have that $nqs>c_{nqs/2}$ and that $\psi_{nqs/2}(\mu_{nqs/2}(nqs)) \geq \frac{1+\alpha}{2}$. From Lemma \ref{lem:k,core}, this implies that there exists with high probability in this intersection graph an $nqs/2$-core of size greater than or equal to $n \cdot \frac{1+\alpha}{2}$, which in turn implies that $\pi^*$ is good. Proposition \ref{lem:alg.term} serves to show that the algorithm will terminate and output a good permutation with high probability.

We now move onto Proposition \ref{prop:overlap}. Note that the proofs of some technical lemmas used to show Proposition \ref{prop:overlap} have been differed to the Electronic Companion for clarity. We also make use of the following result that appears in \cite{cullina2018partial}, the proof of which we also include in the Electronic Companion (Section \ref{sec:cullina}) for completeness.

\begin{lemma}[Lemma 3 \cite{cullina2018partial}]\label{lem:cullina}
	Let $\tau \geq 0$, $q_1\geq 0$, and $q_2\geq 0$. Then 
	\begin{align}\label{eq:cullina.1}
	\min_{z\geq 0} z^{-\tau} e^{q_2(z^2-1)+q_1(z-1)} \leq \zeta^\tau
	\end{align}
	where $\zeta\mathrel{\mathop{:}}=\max \{\sqrt{2}e\frac{q_1}{\tau},4e \left(\frac{q_2}{\tau}\right)^{1/2}\}$. Furthermore, if $z^*$ is the minimizer of this expression,
	\begin{align} \label{eq:ub.z*}
	z^{*2}=\frac{\tau-q_1z^*}{2q_2}\leq \frac{\tau}{2q_2}.
	\end{align}
	\end{lemma}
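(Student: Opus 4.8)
The plan is to prove the bound by the standard Chernoff-type optimization, i.e.\ choosing a near-optimal value of $z$ by hand rather than solving the minimization exactly, and then separately analyzing the true minimizer to get \eqref{eq:ub.z*}. Write $h(z) = -\tau \log z + q_2(z^2-1) + q_1(z-1)$, so that the quantity to be bounded is $\min_{z\ge 0} e^{h(z)}$. Since $h(z)\to\infty$ as $z\to 0^+$ and as $z\to\infty$, the minimum is attained at an interior stationary point $z^*>0$, characterized by $h'(z^*)=0$, that is $-\tau/z^* + 2q_2 z^* + q_1 = 0$, equivalently $2q_2 z^{*2} + q_1 z^* - \tau = 0$. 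Solving for $z^{*2}$ gives $z^{*2} = (\tau - q_1 z^*)/(2q_2)$, and since $q_1 z^* \ge 0$ this is at most $\tau/(2q_2)$, which is exactly \eqref{eq:ub.z*}.

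For the upper bound \eqref{eq:cullina.1}, the idea is to evaluate $e^{h(z)}$ at the single point $z = \zeta$, where $\zeta = \max\{\sqrt 2 e\, q_1/\tau,\ 4e\,(q_2/\tau)^{1/2}\}$, and show $h(\zeta) \le \tau \log \zeta$, i.e.\ that $q_2(\zeta^2-1) + q_1(\zeta-1) \le 2\tau\log\zeta$. Drop the $-1$'s (they only help, since $\zeta \ge$ something like $1$ in the regime of interest, or handle the small-$\zeta$ case trivially) and bound $q_2\zeta^2 + q_1\zeta \le 2\tau \log\zeta$. From the definition of $\zeta$ we have both $q_1 \le \tau \zeta/(\sqrt2\, e)$ and $q_2 \le \tau \zeta^2/(16 e^2)$ — wait, more carefully: $\zeta \ge \sqrt2 e\, q_1/\tau$ gives $q_1\zeta \le \tau\zeta^2/(\sqrt2 e)$, and $\zeta \ge 4e(q_2/\tau)^{1/2}$ gives $\zeta^2 \ge 16 e^2 q_2/\tau$, i.e.\ $q_2\zeta^2 \le \tau \zeta^4/(16e^2)$ — that is the wrong direction. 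The correct manipulation is: $q_2 \le \tau\zeta^2/(16e^2)$ is false; rather $\zeta^2 \ge 16e^2 q_2/\tau$ means $q_2 \le \tau\zeta^2/(16 e^2)$ — this IS right, dividing both sides by $16e^2$ and multiplying by $\tau$... no. Let me just say: from $\zeta \ge 4e(q_2/\tau)^{1/2}$ one gets $q_2 \le \tau\zeta^2/(16e^2)$, hence $q_2\zeta^2 \le \tau\zeta^4/(16e^2)$ which is too weak; instead one should use it as $q_2\zeta^2 \le \tau \zeta^2 \cdot \zeta^2/(16e^2)$... The clean route, which I would actually write out, is to use each defining inequality to replace one factor of $\zeta$: $q_1\zeta \cdot (\text{something}) $. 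I expect the honest version is $q_2 \zeta^2 \le \frac{\tau\zeta}{4e}\cdot\zeta$ using $\zeta \ge 4e q_2\zeta/\tau$ (which follows from $\zeta^2 \ge 16e^2q_2/\tau \ge \cdots$), and then combine with the elementary inequality $\log\zeta \ge 1 - 1/\zeta$ or $\zeta/e \le \zeta\log\zeta$ valid for $\zeta\ge 1$.

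The main obstacle — and the step I would be most careful about — is precisely pinning down the two elementary inequalities that convert the definition of $\zeta$ into the bound $q_2(\zeta^2-1)+q_1(\zeta-1)\le 2\tau\log\zeta$, including correctly handling the degenerate cases $q_1=0$ or $q_2=0$ (where one of the two terms in the $\max$ drops out) and the case $\zeta < 1$ (where $\log\zeta<0$ but also the left side may be negative, so one argues directly that $\min_{z\ge 0}e^{h(z)}\le e^{h(1)} = 1 \le \zeta^\tau$ only if $\zeta\ge 1$ — so for $\zeta<1$ one instead notes $z^*$ itself satisfies a good bound, or uses $z=\zeta$ still works since $h(\zeta)\le 0$). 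Since this is exactly Lemma 3 of \cite{cullina2018partial}, I would follow their argument, reproducing it in the Appendix as promised; the content is a routine but slightly fiddly Chernoff bound, and the only real ideas are (i) the interior stationary point gives \eqref{eq:ub.z*} for free from the quadratic $2q_2 z^{*2}+q_1 z^* = \tau$, and (ii) plugging in $z=\zeta$ and using $\log$-vs-linear estimates yields \eqref{eq:cullina.1}.
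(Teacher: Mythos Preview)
Your derivation of \eqref{eq:ub.z*} from the first-order condition is correct and matches the paper exactly. The problem is the plan for \eqref{eq:cullina.1}. Evaluating at $z=\zeta$ cannot work, and the algebraic confusion you run into is a symptom of this, not a fixable detail. In the regime relevant to the application, $q_1$ and $q_2$ are small relative to $\tau$, so $\zeta<1$; but then $\zeta^{-\tau}>1>\zeta^\tau$, and the exponential factor $e^{q_2(\zeta^2-1)+q_1(\zeta-1)}$ is only marginally below $1$, nowhere near small enough to pull the product down to $\zeta^\tau$. For instance, with $q_1=0$, $q_2=10^{-3}$, $\tau=1$ one has $\zeta\approx 0.34$ yet $\zeta^{-\tau}e^{q_2(\zeta^2-1)}\approx 2.9$, well above $\zeta^\tau\approx 0.34$. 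The inequality $q_2(\zeta^2-1)+q_1(\zeta-1)\le 2\tau\log\zeta$ you are aiming for is simply false there; the target point for the Chernoff substitution should be \emph{large}, not small.

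The paper's route is to evaluate at the true minimizer $z^*$ and exploit the stationarity condition twice. First, from the quadratic $2q_2 z^{*2}+q_1z^*=\tau$ one eliminates $q_2 z^{*2}$ in the exponent to obtain $q_2(z^{*2}-1)+q_1(z^*-1)=\tfrac{\tau}{2}+\tfrac{q_1}{2}z^*-q_2-q_1\le \tau$, using the crude bound $z^*\le \tau/q_1$ (itself read off the same quadratic). Thus the exponential factor is at most $e^\tau$. Second, the explicit positive root $z^*=2\tau/(q_1+\sqrt{q_1^2+8\tau q_2})$ gives the lower bound $z^*\ge \tau/\sqrt{q_1^2+8\tau q_2}$, hence $z^{*-2}\le q_1^2/\tau^2+8q_2/\tau\le \max\{2q_1^2/\tau^2,\,16q_2/\tau\}=(\zeta/e)^2$. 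Multiplying the two pieces yields $z^{*-\tau}e^{q_2(z^{*2}-1)+q_1(z^*-1)}\le (\zeta/e)^\tau e^\tau=\zeta^\tau$. The idea you are missing is that the first-order condition is used not merely to locate $z^*$ but to \emph{simplify the value of the objective at $z^*$}; there is no separate log-versus-linear estimate needed.
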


\begin{proposition}\label{prop:overlap}
	Let $0<\alpha<1$, $\beta>0$, $\gamma>0$. If the conditions of Theorem \ref{th:ub} hold, then a good permutation $\pi$ agrees whp with the ground truth $\pi^*$ on at least a fraction $\alpha$ of the nodes in the graph. In other words, 
	\begin{align} \label{eq:wts}
	\p(\forall \pi, [\pi \text{ good } \Rightarrow \beta(\pi,\pi^*)>\alpha]) \rightarrow 1 \text{ when } n \rightarrow \infty.
	\end{align}
\end{proposition}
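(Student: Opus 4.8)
The plan is to show the contrapositive at the level of a union bound over permutations: if $\beta(\pi,\pi^*) \le \alpha$, then $\pi$ is very unlikely to be good, and there are few enough such $\pi$ that the failure probabilities sum to $o(1)$. Fix $\epsilon \le \alpha$ and consider a permutation $\pi$ with $\beta(\pi,\pi^*)=\epsilon$; work throughout with $p = \pi\circ\pi^{*-1}$, so that $B_{\pi(i)\pi(j)} = B'_{p(i)p(j)}$ and the relevant quantity $\sum_{j\neq i} A_{ij}B_{\pi(i)\pi(j)} = \sum_{j\neq i} A_{ij}B'_{p(i)p(j)}$ depends on the pairs in $S$, partitioned via $S_1$, $S_2^1$, $S_2^2$ as in Section~\ref{subsec:perms}. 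The key observation is that for a pair $(i,j)\in S_1$ (i.e.\ $p(i)=i$) the summand is $A_{ij}B'_{i\,p(j)}$, and since $p(j)\neq j$ on most of $S_1$, the edge $B'_{i\,p(j)}$ is \emph{independent} of $A_{ij}$; similarly on $S_2^2$ the pairs $(A_{ij}, B'_{p(i)p(j)})$ are, up to the within-cycle and $G1/G2$ pairings, independent or only mildly dependent. Thus a node $i$ whose incident pairs lie mostly outside the ``aligned'' part sees a degree in the intersection graph that is stochastically dominated by a sum of Bernoulli's with the product parameter $\qoi\cdot(\text{something})\approx q\cdot q$ rather than $qs$, i.e.\ its expected degree is of order $nq^2 \ll nqs$ (using the second condition $s > 8q/(1-\alpha)$), so it is exponentially unlikely to reach the threshold $nqs/2$.

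The main steps, in order: (1) For a fixed $\pi$ with $\beta(\pi,\pi^*)=\epsilon \le \alpha$, classify the $n$ nodes into those ``touched'' by the aligned part $S_1$ versus the rest; show that at most roughly $n\epsilon + (\text{small correction from }S_2^1)$ nodes can have their degree boosted by genuinely correlated edges, and hence at least $n(1-\alpha) - o(n)$ nodes have degree dominated by a Binomial-type sum with mean $\Theta(nq^2)$. (2) Apply a Chernoff/exponential-moment bound — this is exactly where Lemma~\ref{lem:cullina} enters, with $\tau = nqs/2$, $q_1$ of order $nq^2$ (the linear Bernoulli part), and $q_2$ of order capturing the quadratic/cycle-pairing dependence, so that $\p[\deg \ge nqs/2] \le \zeta^{\tau}$ with $\zeta$ small because $q_1/\tau \asymp q/s$ and $q_2/\tau$ are both $\le n^{-\Omega(1)}$ by the third and fourth conditions in \eqref{eq:conditions.ub}. (3) Combine across the $\ge n(1-\alpha)/2$ ``bad'' nodes that would need to exceed the threshold for $\pi$ to be good: the probability that $\pi$ is good is at most (number of ways to choose which bad nodes exceed the threshold) $\times \zeta^{\,\Theta(nqs\cdot n(1-\alpha))}$, which is at most $\exp(-c\, n (1-\alpha) nqs \log(1/\zeta) + n\log 2)$. (4) Union-bound over all $\pi$ with $\beta(\pi,\pi^*)\le\alpha$: there are at most $n! \le e^{n\log n}$ of them (and more precisely, for each $\epsilon$, at most $\binom{n}{n\epsilon}D_{n - n\epsilon,0}$), so the total failure probability is $\exp\big(n\log n - c\, n^2 qs (1-\alpha)\log(1/\zeta)\big)$; the first condition $nqs \ge 16/(\min(\gamma,\beta)(1-\alpha))$ together with $\log(1/\zeta) = \Omega(\min(\gamma,\beta)\log n)$ makes the exponent $\to -\infty$.

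The hard part will be Step~(1)–(2): carefully accounting for the dependence structure introduced by the cycles in $S_2^2$ (groups $G1$, $G2$, $G3$) and by the pairs in $S_2^1$, so that the degree of a ``bad'' node can honestly be bounded by an expression to which Lemma~\ref{lem:cullina} applies with the stated $q_1, q_2$. In particular one must handle the subtlety that $B'_{p(i)p(j)}$ and $B'_{p(j)p(i)}$ are the same random variable (the $S_2^1$ and $G1$ contributions), which introduces a genuine quadratic term in the exponential moment — this is precisely why Lemma~\ref{lem:cullina} is stated with a $z^2$ term — and that within a single node's degree sum the relevant edges are almost all distinct, so the dependence is weak enough to control. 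Getting the bookkeeping right so that the ``number of bad nodes forced above threshold'' is genuinely $\Omega(n(1-\alpha))$ uniformly over all $\pi$ with overlap $\le\alpha$, rather than merely for ``most'' such $\pi$, is the crux, and it is where the reduction to cycles from Section~\ref{subsec:perms} does the real work; the technical lemmas deferred to the Appendix are presumably exactly these dependence-control estimates.
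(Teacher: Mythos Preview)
Your high-level strategy --- union bound over $\pi$ with $\beta(\pi,\pi^*)\le\alpha$, Chernoff-type control via Lemma~\ref{lem:cullina}, and showing $\log(1/\zeta)=\Omega(\min(\gamma,\beta)\log n)$ to beat the $n!\le e^{n\log n}$ count --- matches the paper. But the execution you outline in Steps~(2)--(3) has a real gap, and the paper takes a different (and simpler) route that avoids it.

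\textbf{The gap.} You propose a \emph{per-node} Chernoff bound (with $\tau=nqs/2$) and then in Step~(3) ``combine across the $\ge n(1-\alpha)/2$ bad nodes'' to get $\zeta^{\Theta(nqs\cdot n(1-\alpha))}$. This combination step implicitly treats the events $\{\deg_i\ge nqs/2\}$ as independent across $i$, or at least assumes the joint MGF factorizes. It does not: the degrees $\deg_i=\sum_{j\ne i}A_{ij}B'_{p(i)p(j)}$ and $\deg_{i'}$ share the variables $A_{ii'}$, $B'_{p(i)p(i')}$, etc. You cannot simply multiply per-node tail bounds, and no argument in your proposal addresses this dependence. (One could observe that ``all nodes in a chosen set $T$ exceed $nqs/2$'' implies ``$\sum_{i\in T}\deg_i\ge |T|\cdot nqs/2$'' and then Chernoff the latter --- but that is precisely the paper's reduction, with an extra, unnecessary union bound over $T$ on top.)

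\textbf{What the paper actually does.} The paper never looks at individual node degrees beyond the single deterministic implication
\[
\pi\text{ good and }\beta(\pi,\pi^*)=\epsilon\le\alpha
\;\Longrightarrow\;
\sum_{(i,j)\in S_2}A_{ij}B'_{p(i)p(j)}\;\ge\;\frac{n(1-\alpha)}{2}\cdot\frac{nqs}{2},
\]
obtained because at least $n(1+\alpha)/2-n\epsilon\ge n(1-\alpha)/2$ of the high-degree nodes must have $p(i)\ne i$, and for such $i$ every pair $(i,j)$ lies in $S_2$. It then applies \emph{one} Chernoff bound to this aggregate sum. The cycle decomposition of $S_2^2$ into groups $G1,G2,G3$ is not used for per-node dependence control as you surmise; it is used to \emph{compute the MGF of the aggregate} exactly. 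Writing $Z_k=\sum_{r=0}^{k-1}A_{p^r(i)p^r(j)}B'_{p^{r+1}(i)p^{r+1}(j)}$ for a cycle of length $k$, the paper derives (via a $2\times2$ transfer-matrix recursion, Lemma~\ref{th:sequence}) the closed form
\[
\E[e^{2tZ_k}]=\alpha_+^k+\alpha_-^k,\qquad \alpha_\pm=\tfrac{1}{2}\big(T\pm\sqrt{T^2-4D}\big),\quad T=p_{11}(e^{2t}-1)+1,\; D=\sigma^2(e^{2t}-1),
\]
and then uses Lemma~\ref{prop:arithm} to reduce every cycle contribution to $(\alpha_+^2+\alpha_-^2)^{k/2}=(T^2-2D)^{k/2}$. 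Summing $k$ over all cycles reconstitutes $|S_2^2|$, yielding a clean bound $\exp\!\big(\tfrac{|S_2^2|}{4}[p_{11}^2(e^{4t}-1)+2(q_{11}-p_{11}^2)(e^{2t}-1)]\big)$. Lemma~\ref{lem:cullina} is then applied once, with $\tau=(1-\alpha)/2$, $q_1=2(q_{11}-p_{11}^2)/p_{11}$, $q_2=p_{11}$ (after normalization), not with your $\tau=nqs/2$. The quadratic $z^2$ term in Lemma~\ref{lem:cullina} arises here from the $p_{11}^2(e^{4t}-1)$ contribution of the aggregate MGF, not from any per-node ``cycle-pairing'' as you suggest.

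In short: your outline would work if you replaced the per-node step by the aggregate-sum Chernoff, but as written Step~(3) is the unjustified step, and the transfer-matrix computation of $\E[e^{tZ_k}]$ is the key technical ingredient you are missing.
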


\begin{proof}{\gh{Proof.}} We show that when conditions (\ref{eq:conditions.ub}) hold, we have $\lim_{n \rightarrow \infty}\p(\exists \pi: \pi \text{ good } \cap \beta(\pi,\pi^*) \leq \alpha) \rightarrow 0$, which implies (\ref{eq:wts}). Note that
\begin{align}\label{eq:existence.pi}
\p(\exists \pi: \pi \text{ good } \cap \beta(\pi,\pi^*)\leq \alpha)=\p(\cup_{\{\pi: \beta(\pi,\pi^*)\leq \alpha\}} \{\pi : \pi \text{ good}\})\leq \sum_{\{\pi:\beta(\pi,\pi^*) \leq \alpha\}} \p(\pi \text{ good}).
\end{align}
	
Let $\pi$ be such that $\beta(\pi,\pi^*)=\epsilon$ where $\epsilon\leq \alpha$. Our goal is to upper bound $\p(\pi \text{ good})$. To do this, we make use of the following fact: if $\pi$ is good, then among all $n(1-\epsilon)$ nodes that $\pi$ and $\pi^*$ do not overlap on, at least $\frac{n(1+\alpha)}{2}-n\epsilon=\frac{n(1+\alpha-2\epsilon)}{2} \geq \frac{n(1-\alpha)}{2}$ nodes must have degree greater than or equal to $\frac{nqs}{2}$, i.e., $\sum_{j \neq i} A_{ij} B_{\pi(i)\pi(j)} \geq \frac{nqs}{2}$ for a set of indices $i$ in $\{i~|~\pi(i)\neq \pi^*(i)\}$ of cardinality greater than or equal to $n \cdot \frac{1-\alpha}{2}$. This further implies that 
$$\sum_{\{i~|~ \pi(i)\neq \pi^*(i)\}} \sum_{j\neq i} A_{ij}B_{\pi(i)\pi(j)} =\sum_{(i,j)\in S_2}A_{ij}B'_{p(i)p(j)} \geq \frac{nqs \cdot n(1-\alpha)/2}{2},$$ where $p=\pi\circ \pi^{*-1}$ and $S_2$ is the set given in Section \ref{subsec:perms}. This implication enables us to upper bound $\p(\pi \text{ good})$ in the following fashion:
$$\p(\pi \text{ good}) \leq \p\left( \sum_{(i,j)\in S_2}A_{ij}B'_{p(i)p(j)} \geq \frac{nqs \cdot n(1-\alpha)/2}{2} \right).$$
Recalling that $S_2=S_2^1 \cup S_2^2$, we have that $$\p(\pi\text{ good}) \leq \p \left(\sum_{(i,j)\in S_2^1}A_{ij}B'_{p(i)p(j)}+\sum_{(i,j)\in S_2^2}A_{ij}B'_{p(i)p(j)}\geq \frac{nqs \cdot n(1-\alpha)}{4} \right).$$
As it turns out, the variables $\{A_{ij}B'_{p(i)p(j)}\}_{(i,j) \in S_2^2}$ are independent from $\{A_{ij}B'_{p(i)p(j)}\}_{(i,j) \in S_2^1}$. This gives rise to the following classical Chernoff inequality:
\begin{align}\label{eq:chernoff}
\p(\pi \text{ good})\leq \inf_{t>0} e^{-tu}  \E\left[e^{t\sum_{(i,j)\in S_2^1} A_{ij}B'_{p(i)p(j)}}\right] \cdot \E\left[e^{t\sum_{(i,j)\in S_2^2} A_{ij}B'_{p(i)p(j)}}\right],
\end{align}
where $u\mathrel{\mathop{:}}=\frac{nqs\cdot n(1-\alpha)}{4}$. It is easy to show that for large enough $n$, $u-\E[\sum_{(i,j)\in S_2}A_{ij}B'_{p(i)p(j)}]>0$. Indeed, using (\ref{eq:size.S21.S22}) and the assumptions $s > \frac{8}{1-\alpha} \cdot q$ and $n(1-\epsilon)qs=o(nqs\cdot n(1-\alpha)/4)$ from Theorem \ref{th:ub}, we have
\begin{align*}
u-\E[\sum_{(i,j)\in S_2}A_{ij}B'_{p(i)p(j)}]&=\frac{nqs\cdot n(1-\alpha)}{4}-\sum_{(i,j)\in S_2^1}\E[A_{ij}B'_{p(i)p(j)}]-\sum_{(i,j)\in S_2^2}\E[A_ijB'_{p(i)p(j)}]\\
&\geq \frac{nqs\cdot n(1-\alpha)}{4}-n(1-\epsilon)qs-n(1-\epsilon)(n-1)q^2>0
\end{align*}
for large enough $n$. It remains to compute the two moment generating functions that appear in (\ref{eq:chernoff}). It is quite straightforward to see that, for $t>0$,
\begin{align*}
\E\left[e^{t\sum_{(i,j)\in S_2^1} A_{ij}B'_{p(i)p(j)}}\right]=\E\left[e^{t\sum_{(i,j)\in S_2^1} A_{ij}B'_{ji}}\right]\leq \E\left[e^{t(A_{ij}B'_{ji}+A_{ji}B'_{ij})}\right]^\frac{|S_2^1|}{2}&\leq \E\left[e^{2t(A_{ij}B'_{ji}+A_{ji}B'_{ij})}\right]^\frac{|S_2^1|}{4}\\
&=(\pii e^{4t}+1-\pii)^{\frac{|S_2^1|}{4}}\\
&=e^{\frac{|S_2^1|}{4} \cdot \log(1+\pii (e^{4t}-1))}.
\end{align*}

\gh{We now focus on $\E\left[e^{t\sum_{(i,j)\in S_2^2} A_{ij}B'_{p(i)p(j)}}\right]$: this expression is not as straightforward to analyze as the variables involved all depend on one another in a more complicated way. A first step in simplifying this analysis is to use $\bar{S}_2^2$ rather than $S_2^2$ as it enables us to use the concept of \emph{cycles} introduced in Section \ref{subsec:perms}. As the matrices $A$ and $B'$ are symmetric and nonnegative, and $t>0$, and in light of the definition of $\bar{S}_2^2$, we have:
$$t\sum_{(i,j)\in S_2^2} A_{ij}B'_{p(i)p(j)} \leq 2t\sum_{\{i,j\}\in \bar{S}_2^2} A_{ij}B'_{p(i)p(j)}.$$
Together with (\ref{eq:partition.S22}), this implies
$$\E\left[e^{t\sum_{(i,j)\in S_2^2} A_{ij}B'_{p(i)p(j)}}\right] \leq \E\left[e^{2t\sum_{\{i,j\}\in \bar{S}_2^2} A_{ij}B'_{p(i)p(j)}}\right] \leq \E\left[e^{2t \sum_{k=2}^{|\bar{S}_2^2|}\sum_{C \in \mathcal{C}_k}\sum_{\{i,j\}\in C} A_{ij}B'_{p(i)p(j)}}\right].$$
Given $\{i,j\} \in \bar{S}_2^2$, $A_{\{i,j\}}=A_{ij}=A_{ji}$ and $B'_{\{i,j\}}=B'_{ij}=B'_{ji}$ will appear exactly once and their appearance will be in the \emph{same} cycle. As $A_{\{i,j\}}$ and $B'_{\{i,j\}}$ are independent of all other random variables, it follows that the cycles are all independent of one another. 
Thus, we obtain:
\begin{align}\label{eq:ub.s22}
\E\left[e^{t\sum_{(i,j)\in S_2^2} A_{ij}B'_{p(i)p(j)}}\right] \leq \prod_{k=2}^{|\bar{S}_2^2|} \prod_{C \in \mathcal{C}_k} \E\left[e^{2t\sum_{\{i,j\}\in C} A_{ij}B'_{p(i)p(j)}}\right].
\end{align}
The difficulty then becomes computing the moment generating function 
$$\E\left[e^{2t\sum_{\{i,j\}\in C} A_{ij}B'_{p(i)p(j)}}\right].$$
As it turns out, this can be done by noting the special structure of the sum: the pair $(A_{\{i,j\}}, B'_{\{i,j\}})$ only appears once, at the extremity of the sum, similarly for $(A_{\{p(i),p(j)\}}, B'_{\{p(i),p(j)\}})$, though they are once removed from the extremity of the sum, and so on and so forth. By conditioning recursively on these pairs, one can obtain the moment generating function of interest as a function of four sequences defined via recursive relationships. Solving each of the four recursions then gives us the moment generating function, which only depends on the length of the cycle $C$. We omit the details of this proof in the main text: it can be found in Section \ref{sec:proofs.ec} of the Electronic Companion. We give here only the end result, namely that for a cycle $C$ of length $k$, 
$$\E\left[e^{2t\sum_{\{i,j\}\in C} A_{ij}B'_{p(i)p(j)}}\right]=\left( \frac{T+\sqrt{T^2-4D}}{2}\right)^{k}+\left(\frac{T-\sqrt{T^2-4D}}{2} \right)^{k}$$
where $T=\pii(e^{2t}-1)+1 \text{ and } D=\sigma^2(e^{2t}-1).$ 
Using this result in (\ref{eq:ub.s22}), we obtain:
\begin{align*}
\E\left[e^{t\sum_{(i,j)\in S_2^2} A_{ij}B'_{p(i)p(j)}}\right] \leq \prod_{k=2}^{|\bar{S}_2^2|} \left( \left( \frac{T+\sqrt{T^2-4D}}{2}\right)^{k}+\left(\frac{T-\sqrt{T^2-4D}}{2} \right)^{k} \right)^{\ell_k}.
\end{align*}
We then leverage Lemma \ref{prop:arithm} from the Electric Companion to further upper bound this quantity:
\begin{align*}
\E\left[e^{t\sum_{(i,j)\in S_2^2} A_{ij}B'_{p(i)p(j)}}\right] &\leq \prod_{k=2}^{|\bar{S}_2^2|} \left( \left( \frac{T+\sqrt{T^2-4D}}{2}\right)^{2}+\left(\frac{T-\sqrt{T^2-4D}}{2} \right)^{2} \right)^{k \cdot \ell_k/2}\\
&= \left( \left( \frac{T+\sqrt{T^2-4D}}{2}\right)^{2}+\left(\frac{T-\sqrt{T^2-4D}}{2} \right)^{2} \right)^{\sum_{k=2}^{|\bar{S}_2^2|}k \cdot \ell_k/2}.
\end{align*}
Recalling (\ref{eq:partition.S22}) and as 
$$\left( \frac{T+\sqrt{T^2-4D}}{2}\right)^{2}+\left(\frac{T-\sqrt{T^2-4D}}{2} \right)^{2}=T^2-2D,$$
it follows that 
\begin{align*}
\E\left[e^{t\sum_{(i,j)\in S_2^2} A_{ij}B'_{p(i)p(j)}}\right] &\leq \left(T^2-2D\right)^{|\bar{S}_2^2|/2}\\ &=\left(\pii^2(e^{2t}-1)^2+2\qii(e^{2t}-1)+1\right)^{|\bar{S}_2^2|/2}\\
&=\left( \pii^2(e^{4t}-1)+2(\qii-\pii^2)(e^{2t}-1)+1\right)^{|\bar{S}_2^2|/2}\\
&=e^{\frac{|\bar{S}_2|^2}{2} \cdot \log \left(1+\pii^2(e^{4t}-1)+2(\qii-\pii^2)(e^{2t}-1)\right)}.
\end{align*}
Putting everything together in (\ref{eq:chernoff}), we get:}
\begin{align*}
\p(\pi \text{ good}) &\leq \inf_{t>0} e^{-t\frac{n\pii \cdot n(1-\alpha)}{4}} \cdot e^{\frac{|S_2^1|}{4} \cdot \log(1+\pii (e^{4t}-1))} \cdot e^{\frac{|\bar{S}_2|^2}{2} \cdot \log \left(1+\pii^2(e^{4t}-1)+2(\qii-\pii^2)(e^{2t}-1)\right)} \nonumber \\
&\leq \inf_{t>0} e^{-t\frac{n\pii \cdot n(1-\alpha)}{4}+ \frac{n}{4} \cdot \log(1+\pii (e^{4t}-1)) + \frac{n^2}{4} \cdot \log \left(1+\pii^2(e^{4t}-1)+2(\qii-\pii^2)(e^{2t}-1)\right)}\\
&\leq \inf_{t>0} e^{-t\frac{n\pii \cdot n(1-\alpha)}{4}} e^{\frac{n}{4} \pii(e^{4t}-1)} \cdot  e^{\frac{n^2}{4} \cdot  \left(\pii^2(e^{4t}-1)+2(\qii-\pii^2)(e^{2t}-1)\right)}\\
&\leq \left(\inf_{t>0} e^{-t \cdot (1-\alpha)} e^{\frac{e^{4t}-1}{n}} \cdot e^{\pii(e^{4t}-1)+2\frac{(\qii-\pii^2)}{\pii}(e^{2t}-1)} \right)^{\frac{n \cdot n \pii}{4}}\\
&\underset{z=e^{2t}}{=} \left( \inf_{z>0} z^{-(1-\alpha)/2} e^{\frac{z^2-1}{n}} e^{\pii(z^2-1)+2\frac{\qii-\pii^2}{\pii}(z-1)}\right)^{\frac{n \cdot n \pii}{4}},
\end{align*}
where we have used (\ref{eq:size.S21.S22}) in the second inequality to obtain $|S_1^1|\leq n(1-\epsilon)\leq n$ and $|\bar{S}_2^2|\leq \frac{n^2}{2}$, and the fact that $\log(1+x)\leq x$ for $x\geq 0$ for the third inequality. We now leverage Lemma~\ref{lem:cullina} to conclude. In our case,
\begin{align} \label{eq:def.qts.cullina}
\tau=\frac{1-\alpha}{2},~q_1=2\frac{\qii-\pii^2}{\pii},~q_2=\pii.
\end{align} 
Let $z^*$ be as defined in Lemma \ref{lem:cullina}: from (\ref{eq:ub.z*}), we have
$z^{*2} \leq \frac{\tau -q_1 z^{*}}{2q_2} \leq \frac{\tau}{2q_2}$ and so $$\frac{z^{*2}-1}{n} \leq \frac{z^{*2}}{n}  \leq \frac{1-\alpha}{4n\pii}.$$
Leveraging this to upper bound $e^{\frac{z^2-1}{n}}$ independently of $z$ and then using (\ref{eq:cullina.1}) with the definitions in (\ref{eq:def.qts.cullina}) in mind, it follows that
$$\p(\pi \text{ good}) \leq e^{\frac{n(1-\alpha)}{16}} \left( \frac{1}{\zeta}\right)^{-\frac{n(1-\alpha) n\pii}{8}},$$
where $\zeta$ is defined as in Lemma \ref{lem:cullina}, i.e., $$\frac{1}{\zeta}=\min \left\{ \frac{\tau}{\sqrt{2}eq_1},\frac{\tau^{1/2}}{4eq_2^{1/2}}\right\}.$$ We show that $\log \left( \frac{1}{\zeta}\right) \geq \min\{\gamma,\beta\}\log n$ by lower bounding each term under the assumptions of Theorem \ref{th:ub}. First,
$$\log \left(\frac{\tau}{\sqrt{2}eq_1} \right)=\log\left( \frac{\pii \cdot(1-\alpha)}{4\sqrt{2} \cdot e (\qii-\pii^2)} \right) \geq \beta \log(n)+O(1).$$
Secondly, noting that for the assumptions in (\ref{eq:conditions.ub}) to make sense, it must be the case that $\gamma <\frac{1}{2}$ and so $n^{1/2} \geq n^{\gamma}$:
$$\log \left( \frac{\tau^{1/2}}{4eq_2^{1/2}} \right)=\log \left( \frac{\sqrt{1-\alpha}}{4e\sqrt{2}\sqrt{qs}} \right) \geq \gamma \log (n)+O(1).$$
As $|\{\pi~|~\beta(\pi,\pi^*)\leq \alpha\}|\leq |\{\pi \in P_n\}|=n!\leq n^{\log n}$, plugging everything back into (\ref{eq:existence.pi}) gives us
$$\p(\exists \pi: \pi \text{ good } \cap \beta(\pi,\pi^*)< \alpha) \leq e^{n \log n+\frac{n(1-\alpha)}{16}-\frac{n(1-\alpha)n\pii (\min\{\gamma,\beta\} \log n+O(1))}{8}}\leq e^{-n\log n +O(n)},$$
where the final inequality is implied by the assumptions in Theorem \ref{th:ub}. As the right hand side of this inequality converges to $0$ when $n\rightarrow \infty$, we get our result.
\end{proof}

\newpage
\appendix

\section{Proof of Proposition \ref{lem:alg.term}}

The proof of Proposition \ref{lem:alg.term} makes use of the following lemmas.

\begin{lemma}[Berry-Esseen Theorem; see, e.g., \cite{korolev2010upper}]\label{lem:barry.esseen}
	Let $X_1,X_2,\ldots $ be independent and identically distributed random variables such that $\E[X_1]=0$, $\E[X_1^2]=\sigma^2>0$ and $\E[|X_1|^3]<\beta^3<\infty$. For $n\geq 1$, define
	$$S_n=\frac{X_1+\ldots+X_n}{\sigma\sqrt{n}}$$
	and let $F_n(x)=\p[S_n\leq x]$ be its cdf. Let $\phi(x)$ be the cdf of a standard normal distribution. For any $x$ and any $n\geq 1$, it holds
	$$|F_n(x)-\phi(x)|\leq \frac{0.34445\beta^3+0.16844}{\sqrt{n}}.$$
\end{lemma}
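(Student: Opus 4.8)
The plan is to prove the Berry--Esseen bound by the classical Fourier-analytic method --- Esseen's smoothing inequality combined with a Taylor expansion of the characteristic function --- and then to indicate how the optimized numerical constants of \cite{korolev2010upper} are recovered. By replacing each $X_i$ with $X_i/\sigma$ (which leaves $S_n$ unchanged) we may assume $\sigma=1$, so that $\E|X_1|^3<\beta^3$ with $\beta^3\geq 1$ (the latter by Lyapunov's inequality $\E|X_1|^3\geq(\E X_1^2)^{3/2}$), and the claim becomes $\sup_x|F_n(x)-\phi(x)|\leq(0.34445\,\beta^3+0.16844)/\sqrt n$. Write $\varphi$ for the characteristic function of $X_1$, so that $S_n$ has characteristic function $f_n(t)=\varphi(t/\sqrt n)^n$, while the standard normal has characteristic function $e^{-t^2/2}$.

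First I would invoke Esseen's smoothing inequality: for any distribution function $F$ with characteristic function $f$, comparing against the standard normal (whose density is bounded by $1/\sqrt{2\pi}$), one has for every $T>0$
\[
\sup_x|F(x)-\phi(x)|\;\leq\;\frac{1}{\pi}\int_{-T}^{T}\left|\frac{f(t)-e^{-t^2/2}}{t}\right|\,dt\;+\;\frac{24}{\pi\sqrt{2\pi}}\cdot\frac{1}{T}.
\]
This reduces the problem to estimating the characteristic-function difference on $[-T,T]$ and then choosing $T$ of order $\sqrt n/\beta^3$ to balance the two terms. Next I would carry out the characteristic-function estimate. Since $\E X_1=0$, $\E X_1^2=1$ and $\E|X_1|^3<\beta^3$, Taylor's theorem gives $\varphi(u)=1-u^2/2+\theta(u)$ with $|\theta(u)|\leq\beta^3|u|^3/6$; combined with $|\varphi(u)|\leq\exp(-u^2/2+\beta^3|u|^3/6)$ for small $|u|$ and the crude bound $|\varphi(u)|\leq 1$ otherwise, this yields $|\varphi(t/\sqrt n)|\leq e^{-t^2/(3n)}$ on a range $|t|\leq c\sqrt n/\beta^3$. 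Using $|a^n-b^n|\leq n|a-b|\max(|a|,|b|)^{n-1}$ together with $|e^z-1-z|\leq\tfrac12|z|^2e^{|z|}$ one obtains, on that range,
\[
\left|\varphi(t/\sqrt n)^n-e^{-t^2/2}\right|\;\leq\;\frac{C\,\beta^3\,|t|^3}{\sqrt n}\,e^{-t^2/3}.
\]
Dividing by $|t|$, integrating, and using $\int_{\mathbb R}t^2e^{-t^2/3}\,dt<\infty$ bounds the first term in the smoothing inequality by a constant multiple of $\beta^3/\sqrt n$; the tail term is of the same order by the choice of $T$. This already proves the theorem with some universal constants in place of $0.34445$ and $0.16844$.

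The main obstacle is driving those constants down to the stated values, which is precisely the content of \cite{korolev2010upper}: one replaces Esseen's inequality by Prawitz's sharper one-sided smoothing inequality (with a strictly smaller kernel constant than $24/\pi$), uses a two-term moment bound on $|\varphi(u)-e^{-u^2/2}|$ that keeps track of both $\E X_1^2$ and $\E|X_1|^3$ and is tight for $|u|$ neither very small nor very large, splits the integration interval at an intermediate point, and finally optimizes numerically over $T$ and the split point --- the additive split of the final bound into a $\beta^3$-proportional piece and a $\beta^3$-free piece is an artifact of this optimization. Since this is delicate numerical bookkeeping rather than conceptual work, and since the present paper uses the statement only as a black box, the cleanest course in the write-up is to cite \cite{korolev2010upper} for the explicit constants, the argument above being a self-contained proof up to their precise values.
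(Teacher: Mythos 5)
The paper does not prove this lemma at all: it is stated as a black-box citation to \cite{korolev2010upper}, and no argument is given. Your sketch is therefore supplying something the paper deliberately omits. The outline you give --- normalize to $\sigma=1$, apply Esseen's smoothing inequality with the normal comparison density, estimate the characteristic-function difference $|\varphi(t/\sqrt n)^n - e^{-t^2/2}|$ by a third-moment Taylor expansion plus the exponential majorant $|\varphi(u)|\le e^{-u^2/3}$ on a suitable range, integrate, and balance against the tail term by choosing $T \asymp \sqrt n/\beta^3$ --- is the correct classical route and would indeed yield the Berry--Esseen bound with \emph{some} universal constants. You are also right that the specific pair $(0.34445, 0.16844)$, including the affine-in-$\beta^3$ structure of the bound, is not reachable by that argument alone and comes from the much more delicate Prawitz-smoothing and numerical optimization in \cite{korolev2010upper}. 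Since the paper uses the lemma purely as an external fact, deferring the sharp constants to the reference (as you do) is exactly the appropriate level of rigor here; there is no gap relative to what the paper itself establishes, because the paper establishes nothing. One small technical caveat if you ever expanded the sketch into a full proof: the step asserting $|\varphi(u)|\le\exp(-u^2/2+\beta^3|u|^3/6)$ for small $|u|$ does not follow immediately from the one-sided Taylor expansion $\varphi(u)=1-u^2/2+\theta(u)$; one typically needs to work with $|\varphi(u)|^2$ (the characteristic function of a symmetrized variable) or otherwise control the imaginary part, so a cleaner self-contained write-up would take that route.
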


\begin{lemma}\label{lem:lb.psi}
Let $\mu>0$ and recall the notation $\psi_j(\mu)=\p(Po(\mu) \geq j)$, where $Po(\mu)$ is a Poisson random variable of parameter $\mu$ and $j$ is a nonnegative integer. Denote by $\phi$ the cdf of a standard normal. We have
$$\psi_j(\mu) \geq 1-\phi \left(\frac{j-\mu}{\sqrt{\mu}} \right)-\frac{0.55}{\sqrt{\lceil\mu\rceil}}.$$
In particular, for $\lambda \geq 20$,
$$\psi_{\lambda/2-1}(2\lambda/3) \geq 0.7$$
	\end{lemma}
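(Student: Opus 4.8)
The plan is to derive both assertions from one Berry--Esseen estimate comparing the law of $Po(\mu)$ to a Gaussian: first establish the general inequality for $\psi_j(\mu)$, then obtain $\psi_{\lambda/2-1}(2\lambda/3)\ge 0.7$ by substituting $\mu=2\lambda/3$, $j=\lambda/2-1$ and noting that the resulting expression is increasing in $\lambda$, so that it suffices to check it at $\lambda=20$.

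For the general inequality, fix $\mu>0$. For every integer $N\ge 1$, infinite divisibility of the Poisson law gives $Po(\mu)\stackrel{d}{=}\sum_{i=1}^N Y_i$ with $Y_i$ i.i.d.\ $Po(\mu/N)$; writing $X_i=Y_i-\mu/N$, the $X_i$ are centered with variance $\sigma^2=\mu/N$, and $\frac{1}{\sigma\sqrt N}\sum_{i=1}^N X_i=\frac{Po(\mu)-\mu}{\sqrt\mu}$ for every $N$. I would bound the normalized third absolute moment by Cauchy--Schwarz, using that $Po(\theta)$ has second and fourth central moments $\theta$ and $\theta+3\theta^2$: $\E|X_1|^3\le(\E X_1^2)^{1/2}(\E X_1^4)^{1/2}$, hence $\E|X_1|^3/\sigma^3\le\sqrt{1+3\mu/N}\,/\sqrt{\mu/N}$. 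Feeding this into Lemma~\ref{lem:barry.esseen} with $n=N$, the cdf $F$ of $(Po(\mu)-\mu)/\sqrt\mu$ satisfies, for every $x$ and every $N$,
\[
|F(x)-\phi(x)|\ \le\ \frac{0.34445\sqrt{1+3\mu/N}}{\sqrt\mu}+\frac{0.16844}{\sqrt N}.
\]
The right-hand side decreases in $N$ to $0.34445/\sqrt\mu$ while the left-hand side is independent of $N$, so $|F(x)-\phi(x)|\le 0.34445/\sqrt\mu$. Since $\psi_j(\mu)=1-\p(Po(\mu)\le\lceil j\rceil-1)=1-F\bigl((\lceil j\rceil-1-\mu)/\sqrt\mu\bigr)$ and $\lceil j\rceil-1\le j$, monotonicity of $\phi$ gives $\psi_j(\mu)\ge 1-\phi\bigl((j-\mu)/\sqrt\mu\bigr)-0.34445/\sqrt\mu$, valid for any real $j\ge 0$. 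Finally, $0.34445/\sqrt\mu\le 0.55/\sqrt{\lceil\mu\rceil}$ whenever $\lceil\mu\rceil\le(0.55/0.34445)^2\mu$, which holds for $\mu\ge 1$ (then $\lceil\mu\rceil\le 2\mu$); and for $\mu<1$ the claim is immediate, since $\psi_0(\mu)=1$ dominates the right-hand side when $j=0$, and for $j\ge1$ one has $(j-\mu)/\sqrt\mu>0$, so $\phi((j-\mu)/\sqrt\mu)>\tfrac12$ and the right-hand side is $<\tfrac12-0.55<0\le\psi_j(\mu)$.

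For the numerical bound, apply the general inequality with $\mu=2\lambda/3$ and $j=\lambda/2-1$, for which $\frac{j-\mu}{\sqrt\mu}=-\frac{\lambda+6}{6}\sqrt{\tfrac{3}{2\lambda}}$. Since $\frac{d}{d\lambda}\bigl(\frac{\lambda+6}{\sqrt\lambda}\bigr)=\frac{\lambda-6}{2\lambda^{3/2}}>0$ for $\lambda>6$, this quantity is decreasing in $\lambda$, and $\lceil 2\lambda/3\rceil\ge 14$ for $\lambda\ge 20$, so the right-hand side is increasing on $[20,\infty)$; therefore, for all $\lambda\ge 20$,
\[
\psi_{\lambda/2-1}(2\lambda/3)\ \ge\ 1-\phi\!\Bigl(-\tfrac{26}{6}\sqrt{\tfrac{3}{40}}\Bigr)-\frac{0.55}{\sqrt{14}}\ \ge\ 1-0.12-0.15\ >\ 0.7,
\]
using $\tfrac{26}{6}\sqrt{3/40}>1.18$ (so the $\phi$-term is $<\phi(-1.18)<0.12$) and $\sqrt{14}>3.7$.

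There is no deep step; the difficulty is in the bookkeeping of constants. The points that need care are: applying Lemma~\ref{lem:barry.esseen} with the third moment normalized by $\sigma^3$, so the bound is scale-invariant; justifying the passage $N\to\infty$ (each per-$N$ inequality is genuine and the bound is monotone in $N$); handling the small-$\mu$ regime, where the Gaussian estimate is vacuous; using the general inequality with the actual, possibly non-integer, value $j=\lambda/2-1$, which is legitimate because $\psi_j(\mu)=1-\p(Po(\mu)\le\lceil j\rceil-1)$ with $\lceil j\rceil-1\le j$; and the final numerical evaluation, whose margin at $\lambda=20$ is only about $0.03$, so the monotonicity-in-$\lambda$ reduction and the bounds $\phi(-1.18)<0.12$ and $0.55/\sqrt{14}<0.15$ must be verified carefully.
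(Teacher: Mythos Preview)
Your proof is correct and follows essentially the same approach as the paper: decompose $Po(\mu)$ as a sum of i.i.d.\ Poisson variables and apply the Berry--Esseen bound of Lemma~\ref{lem:barry.esseen}, then substitute and use monotonicity in $\lambda$ for the numerical assertion. The only technical difference is that the paper fixes the number of summands to $N=\lceil\mu\rceil$ (so that each summand has parameter $\le 1$) and reads off $0.55/\sqrt{\lceil\mu\rceil}$ directly, whereas you let $N\to\infty$ to obtain $0.34445/\sqrt{\mu}$ first and then convert; your treatment of the third-moment bound via Cauchy--Schwarz and of the edge cases ($\mu<1$, non-integer $j$) is more careful than the paper's.
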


\begin{proof}{\gh{Proof.}}
We can write $Po(\mu)=\sum_{i=1}^{\lceil \mu \rceil} P_i$ where $P_i, i=1,\ldots,\mu$ are independent Poisson random variables with parameter $\mu/\lceil \mu \rceil\leq 1$. Let $X_i=P_i-\frac{\mu}{\lceil \mu \rceil}$: we have that $\E[X_i]=0$, $\E[X_i^2]=\mu/\lceil \mu \rceil$, and $\E[X_i^3]=\mu/\lceil \mu \rceil.$
From this, we have that 
\begin{align*}
\psi_{j}(\mu)=\p(Po(\mu)\geq j)&=\p\left(\sum_{i=1}^{\lceil \mu \rceil} (Pi-\E[P_i])\geq j-\mu\right)\\
&=\p\left(\frac{\sum_{i=1}^{\lceil \mu \rceil} X_i}{\sqrt{\mu/\lceil \mu \rceil} \sqrt{\lceil \mu \rceil}}\geq \frac{j-\mu}{\sqrt{\mu/\lceil \mu \rceil} \sqrt{\lceil \mu \rceil}} \right)\\
&=\p\left(\frac{\sum_{i=1}^{\lceil \mu \rceil} X_i}{\sqrt{\mu}}\geq \frac{j-\mu}{\sqrt{\mu}} \right)\\
&=1-F_{\lceil \mu \rceil} \left( \frac{j-\mu}{\sqrt{\mu}}\right).
\end{align*}
Using Lemma \ref{lem:barry.esseen} and the fact that $\E[|X_i|^3]=\frac{\mu}{\lceil \mu \rceil} \leq 1$, we have that for any $x$,
$$||1-F_{\lceil \mu \rceil}(x)|-|1-\phi(x)||\leq |F_{\lceil \mu \rceil}(x)-\phi(x)|\leq \frac{0.55}{\sqrt{\lceil \mu\rceil}},$$
which implies $1-F_{\lceil \mu \rceil}(x)\geq 1-\phi(x)-\frac{0.55}{\sqrt{\lceil \mu \rceil}}$ and so,
$$\psi_j(\mu) \geq 1-\phi \left(\frac{j-\mu}{\sqrt{\mu}} \right)-\frac{0.55}{\sqrt{\lceil \mu \rceil}}.$$
Now, assume that $\lambda \geq 20$. We have that
$$\psi_{\lambda/2-1}(2\lambda/3) \geq 1-\phi \left(\frac{\lambda/2-1-2\lambda/3}{\sqrt{2\lambda/3}} \right)-\frac{0.55}{\sqrt{\lceil 2\lambda/3 \rceil}}=1-\phi \left(\frac{-\lambda/6-1}{\sqrt{2\lambda/3}} \right)-\frac{0.55}{\lceil \sqrt{2\lambda/3} \rceil}.$$
When $\lambda \geq 20$, we have that $\lambda\mapsto-\frac{\lambda/6+1}{\sqrt{2\lambda/3}}$ is decreasing and that $\lambda \mapsto -\frac{0.55}{\lceil\sqrt{2\lambda/3} \rceil}$ is increasing. From this, we get
$$ \psi_{\lambda/2-1}(2\lambda/3) \geq 1-\phi \left(\frac{-20/6-1}{\sqrt{2\cdot 20/3}} \right)-\frac{0.55}{\sqrt{\lceil 2\cdot 20/3 \rceil}}\geq 0.7.$$
\end{proof}

\begin{proof}{Proof of Proposition \ref{lem:alg.term}.}
The permutation $\pi^*$ is good if the intersection graph of $G_A$ and $G_{B'}$ has at least $n\cdot \frac{1+\alpha}{2}$ nodes that have degree greater than or equal to $\frac{nqs}{2}$. By definition of a $k$-core, if this intersection graph has an $nqs/2$-core of size greater than or equal to $n(1+\alpha)/2$ whp, then $\pi^*$ is good whp. 

The intersection graph of $G_A$ and $G_{B'}$ is a graph over $n$ nodes such that there is an edge between nodes $i$ and $j$ if and only if there is an edge in $G_A$ and $G_{B'}$ between $i$ and $j$. As $\p((A_{ij},B'_{ij})=(1,1))=qs$ and $(A_{ij}, B'_{ij})$ is independent of $(A_{kl}, B'_{kl})$ for any $(k,l) \neq (i,j)$, it follows that the intersection graph of $G_A$ and $G_{B'}$ is an Erd\H{o}s-R\'enyi graph of parameters $(n,qs)$. If we are able to show that under the conditions of Theorem \ref{th:ub},
\begin{enumerate}[(i)]
	\item $nqs > c_{nqs/2}$
	\item $\psi_{nqs/2}(\mu_{nqs/2}(nqs)) \geq n \cdot \frac{1+\alpha}{2}$
\end{enumerate}
then from Lemma \ref{lem:k,core}, the $nqs/2$-core of the intersection graph of $G_A$ and $G_{B'}$ contains a fraction greater than or equal to $\frac{1+\alpha}{2}$ of the nodes in the graph whp and Proposition \ref{lem:alg.term} follows.

We start by proving (i). Recall that $c_{nqs/2} =\inf_{\mu>0}\frac{\mu}{\psi_{nqs/2-1}(\mu)}.$ Hence, for any $\mu>0$, in particular for $\mu=2nqs/3>0$,
$$c_{nqs/2} \leq \frac{2nqs/3}{\psi_{nqs/2-1}(2nqs/3)} \leq \frac{2nqs/3}{0.7}<nqs,$$
where we have used Lemma \ref{lem:lb.psi} for the second inequality as the conditions of Theorem \ref{th:ub} imply that $\lambda \geq 20$.

We now prove (ii) by showing that under the conditions of Theorem \ref{th:ub}, $\frac{2nqs}{3} \leq \mu_{nqs/2}(nqs) \leq nqs$ where $\mu_{nqs/2}(nqs)$ is the largest solution to $\frac{\mu}{\psi_{nqs/2-1}(\mu)}=nqs$. This implies that
\begin{align*}
\psi_{nqs/2}(\mu_{nqs/2}(nqs))&=\p(Po(\mu_{nqs/2}(nqs)) \geq nqs/2)\\
&=1-\p(Po(\mu_{nqs/2}(nqs))) \leq \mu_{nqs/2}(nqs)-(\mu_{nqs/2}(nqs)-nqs/2))\\
&\geq 1-\p(Po(\mu_{nqs/2}(nqs)) \leq \mu_{nqs/2}(nqs)-nqs/6)\\
&\geq 1-e^{-(nqs/6)^2/2(\mu_{nqs/2}(nqs)+nqs/6)}\\
&\geq 1-e^{-(nqs)^2/(36\cdot 7/3 \cdot nqs)}=1-e^{-nqs/84},
\end{align*}
where we have used a standard Chernoff bound for Poisson random variables for the second inequality, see, e.g. \cite{boucheron2013concentration}. As the conditions of Theorem \ref{th:ub} require that $nqs \geq 84 \cdot \log \left( \frac{2}{1-\alpha}\right)$, we get that $\psi_{nqs/2}(\mu_{nqs/2}(nqs)) \geq \frac{1+\alpha}{2}$.

To show that $\frac{2nqs}{3} \leq \mu_{nqs/2}(nqs) \leq nqs$, let $f_{nqs}(\mu)=\mu-nqs \cdot \psi_{nqs/2-1}(\mu)$, which is a continuous function of $\mu$ for fixed $nqs$. Here, $\mu_{nqs/2}(nqs)$ is the largest solution to $f_{nqs}(\mu)=0$. It is straightforward to see that for $\mu>nqs$, $f_{nqs}(\mu)>0$ as $\psi_{nqs/2-1} \leq 1$ for any value that $nqs$ takes. Furthermore, from Lemma \ref{lem:lb.psi},
$$f_{nqs}(2nqs/3)=\frac{2nqs}{3}-nqs\cdot \psi_{nqs/2-1}(2nqs/3) \leq \frac{2nqs}{3}-nqs\cdot 0.7<0.$$ Combining these results, we get that $\mu_{nqs/2}(nqs)$ must be less than or equal to $nqs$ and that there is a solution $f_{nqs}(\mu)=0$ in the interval $[2nqs/3,nqs]$. Our result immediately follows. 
\end{proof}

\section{Proofs of auxiliary results for Proposition \ref{prop:overlap}}\label{sec:proofs.ec}
	\begin{lemma}\label{prop:arithm}
		Let $n \in \mathbb{N}$ and let $ 2 \leq k \leq n$. Furthermore, let $\alpha,\beta$ be positive real numbers. We have
		$$(\alpha^k+\beta^k)^{n/k} \leq (\alpha^2+\beta^2)^{n/2}.$$
		
	\end{lemma}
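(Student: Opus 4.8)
The plan is to reduce the inequality $(\alpha^k+\beta^k)^{n/k} \le (\alpha^2+\beta^2)^{n/2}$ to a statement about a single variable, namely $t := \beta/\alpha$, by homogeneity. First I would observe that if $\alpha = 0$ or $\beta = 0$ the claim is immediate (both sides reduce to a single power), so I may assume $\alpha,\beta>0$ and divide through by $\alpha^n$; after this normalization it suffices to show $(1+t^k)^{1/k} \le (1+t^2)^{1/2}$ for all $t>0$ and all integers $k \ge 2$. Equivalently, raising both sides to the power $2k$, I need $(1+t^k)^2 \le (1+t^2)^k$.

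The cleanest route for this last inequality is an appeal to monotonicity of $\ell^p$-norms: for a fixed vector $(1,t)$, the map $p \mapsto \|(1,t)\|_p = (1+t^p)^{1/p}$ is nonincreasing on $p \in [1,\infty)$, so $\|(1,t)\|_k \le \|(1,t)\|_2$ whenever $k \ge 2$, which is exactly $(1+t^k)^{1/k}\le(1+t^2)^{1/2}$. If I want to keep the argument self-contained and avoid quoting the norm-monotonicity fact, I can instead prove $(1+t^2)^k \ge (1+t^k)^2$ directly: expand the left side by the binomial theorem as $\sum_{j=0}^k \binom{k}{j} t^{2j}$, and note this sum contains the terms $j=0$, $j=k$, and (since $k\ge 2$) at least the middle-index contributions, whose $t$-exponents can be paired to dominate $1 + 2t^k + t^{2k}$; more simply, since all binomial terms are nonnegative, $(1+t^2)^k \ge 1 + \binom{k}{1}t^2 + \cdots \ge 1 + 2t^k + t^{2k}$ can be checked by the AM–GM bound $t^{a}+t^{b} \ge 2t^{(a+b)/2}$ applied to symmetric pairs of terms summing to exponent $2k$. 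Either way the inequality $(1+t^k)^2\le(1+t^2)^k$ follows.

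The main (and only real) obstacle is making the elementary binomial/AM–GM bookkeeping in the self-contained version airtight for all $k\ge2$ simultaneously — in particular handling the case $t<1$ versus $t>1$ and the parity of $k$ when pairing exponents — but this is routine once one notices that $t^a + t^b \ge 2t^{(a+b)/2}$ holds for all $t>0$ regardless of sign of $a-b$, so I would lean on the $\ell^p$-norm monotonicity statement as the headline argument and relegate the hands-on verification to a one-line remark. Finally I would restore the original variables by raising back to the power $n$: since $n/2 = (n/k)\cdot(k/2)$ and $(1+t^k)^{k/2}\le(1+t^2)^{k/2}\cdot\text{(nothing extra)}$ — more precisely, from $(1+t^k)^{1/k}\le(1+t^2)^{1/2}$ raise both sides to the power $n$ to get $(1+t^k)^{n/k}\le(1+t^2)^{n/2}$, then multiply by $\alpha^n$ to recover the claimed bound.
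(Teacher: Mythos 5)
Your proof is correct, and it reaches the same intermediate reduction as the paper---after factoring out the exponent $n/k\ge 1$, everything hinges on showing $(\alpha^2+\beta^2)^{k/2}\ge\alpha^k+\beta^k$, which after dividing by $\alpha^k$ is exactly your normalized inequality $(1+t^2)^{1/2}\ge(1+t^k)^{1/k}$. The difference lies in how that reduced inequality is established: the paper verifies it by expanding $(\alpha^2+\beta^2)^{k/2}$ with the binomial theorem and splitting into the cases $k$ even / $k$ odd (using $\sqrt{\alpha^2+\beta^2}\ge\max(\alpha,\beta)$ to handle the odd case), whereas you identify it outright as the monotonicity of $p\mapsto\|(1,t)\|_p$ on $[1,\infty)$. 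Your framing is cleaner and more general---it dispenses with the parity case-split entirely and would extend verbatim to more than two positive summands---while the paper's hands-on expansion keeps the lemma self-contained without appealing to a named norm inequality. Your fall-back binomial/AM--GM sketch (pair $t^{k-1}+t^{k+1}\ge 2t^k$ and use $\binom{k}{\lfloor k/2\rfloor}\ge 2$) is essentially the paper's argument reorganized, so you have effectively given both proofs.
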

	
	\begin{proof}{\gh{Proof.}}
		We have
		\begin{align*}
		\frac{(\alpha^2+\beta^2)^{n/2}}{(\alpha^k+\beta^k)^{n/k}}=\frac{(\alpha^2+\beta^2)^{k/2 \cdot n/k}}{(\alpha^k+\beta^k)^{n/k}}= \left(\frac{(\alpha^2+\beta^2)^{k/2}}{(\alpha^k+\beta^k)} \right)^{n/k}.
		\end{align*}
		As $n/k>1$, we simply need to check whether $(a^2+\beta^2)^{k/2} \geq (\alpha^k+\beta^k)$. If $k$ is divisible by $2$, this is straightforward from the Binomial theorem as $\alpha>0$ and $\beta>0$. If $k$ is not divisible by $2$, then $k-1$ is divisible by $2$ and $(\alpha^2+\beta^2)^{k}=(\alpha^2+\beta^2)^{(k-1)/2} \cdot \sqrt{\alpha^2+\beta^2}$. By applying the Binomial theorem to the first part of the sum and noting that $\sqrt{\alpha^2+\beta^2} \geq \alpha$ and $\beta$, the result follows.
	\end{proof}
\begin{lemma} \label{th:sequence}
	Let $(A_0,B_0)$, $(A_1,B_1), \ldots, (A_n,B_n)$ be $n+1$ pairs of random variables taking values in $\{0,1\}^2$. We suppose that $(A_i,B_i)$ are distributed following the probability distribution $P$ given in Section \ref{subsec:major.results.lb}. In particular, the covariance between $A_i$ and $B_i$ is given by $\sigma^2>0$. We further assume that all pairs $\{(A_i,B_i)\}$ are independent of one another. Define
	$$W\mathrel{\mathop{:}}=A_0B_1+A_1B_2+\ldots+A_{n-1}B_n+A_nB_0.$$
	For any $t>0$, we have that
	$$\E[e^{tW}]=\left( \frac{T+\sqrt{T^2-4D}}{2}\right)^{n+1}+\left(\frac{T-\sqrt{T^2-4D}}{2}\right)^{n+1}$$
	where $T=\pii(e^{t}-1)+1$ and $D=\sigma^2(e^t-1)$. 
\end{lemma}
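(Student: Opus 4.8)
The plan is to recognise $\E[e^{tW}]$ as the trace of a power of a fixed $2\times2$ ``transfer matrix'' and to evaluate that trace through the matrix's characteristic polynomial. The structural fact that makes this work is that, writing $b_{n+1}:=b_0$ so that $W=\sum_{i=0}^n A_iB_{i+1}$, each variable $A_i$ appears only in the summand $A_iB_{i+1}$ and each variable $B_i$ appears only in the summand $A_{i-1}B_i$ (indices mod $n+1$).

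Concretely, I would expand the expectation over the $2(n+1)$ Bernoulli coordinates and use independence of the pairs to write
\begin{align*}
\E[e^{tW}]=\sum_{a_0,\dots,a_n}\ \sum_{b_0,\dots,b_n}\ \prod_{i=0}^n P(a_i,b_i)\,e^{t a_i b_{i+1}},
\end{align*}
and then sum out each $b_i$, which occurs precisely in the two factors $P(a_i,b_i)$ and $e^{t a_{i-1}b_i}$. Defining the $2\times2$ matrix
\begin{align*}
N_{a',a}:=\sum_{b\in\{0,1\}}P(a,b)\,e^{t a'b},\qquad a',a\in\{0,1\},
\end{align*}
this yields $\E[e^{tW}]=\sum_{a_0,\dots,a_n}\prod_{i=0}^n N_{a_{i-1},a_i}$ with $a_{-1}:=a_n$, and successively summing out $a_1,\dots,a_{n-1},a_0$ collapses the cyclic product to $\Tr\!\big(N^{\,n+1}\big)$.

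Next I would compute $N$ from (\ref{eq:def.p}). For $a'=0$ the exponential is $1$, so $N_{0,0}=1-q$ and $N_{0,1}=q$; for $a'=1$ one has $N_{1,0}=\poo+\poi e^t$ and $N_{1,1}=\pio+\pii e^t$. A short computation gives $\Tr(N)=(1-q)+\pio+\pii e^t=1+qs(e^t-1)=\pii(e^t-1)+1=T$, and a routine expansion of $N_{0,0}N_{1,1}-N_{0,1}N_{1,0}$ gives $\det(N)=\sigma^2(e^t-1)=D$, with $\sigma^2=q(s-q)$. Since $N$ is $2\times2$ with trace $T$ and determinant $D$, its eigenvalues are $\lambda_\pm=\tfrac12\big(T\pm\sqrt{T^2-4D}\big)$, the eigenvalues of $N^{n+1}$ are $\lambda_\pm^{\,n+1}$ counted with algebraic multiplicity (so the identity survives even if $\lambda_+=\lambda_-$ or the roots are a complex conjugate pair), and therefore $\E[e^{tW}]=\Tr\!\big(N^{n+1}\big)=\lambda_+^{\,n+1}+\lambda_-^{\,n+1}$, which is exactly the claimed formula.

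The only genuine work here is elementary arithmetic: verifying $\Tr(N)=T$ and $\det(N)=D$, and being careful about the row/column convention and the cyclic wrap-around index $a_{-1}=a_n$. I do not expect any real obstacle; the ``hard'' part is purely bookkeeping. An equivalent, more hands-on route that avoids diagonalisation is to unroll the cycle by conditioning successively on $(A_0,B_0),(A_1,B_1),\dots$: this produces a linear recursion whose companion matrix is exactly $N$ (the four ``sequences'' amount to tracking the conditional generating function through the two possible values of the relevant coordinate), and solving the recursion and closing the loop reproduces the same closed form.
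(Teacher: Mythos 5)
Your proof is correct, and it takes a genuinely cleaner route than the paper's, even though the two ultimately hinge on the same $2\times2$ matrix. The paper conditions successively on $(A_0,B_0),(A_1,B_1),\dots$, tracks four conditional moment-generating sequences $\uoo_n,\uoi_n,\uio_n,\uii_n$ satisfying the vector recursion $v_n=Mv_{n-1}$ with the matrix $M$ equal to your $N$, then invokes an explicit closed form for $M^{n-1}$ (Lemma \ref{lem:Williams}) and performs a fairly heavy final simplification (evaluating combinations like $\poo\uoo_1+\poi\uoi_1+\pio\uio_1+\pii\uii_1=T^2-2D$ and $r_{11}(\cdots)+r_{12}(\cdots)+r_{21}(\cdots)=\tfrac{T^3}{2}-2DT$) to reassemble $\E[e^{tW}]$. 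You instead exploit the cyclic structure of $W$ directly: summing out the $b_i$'s and then the $a_i$'s shows at once that $\E[e^{tW}]=\Tr\big(N^{n+1}\big)$, after which the spectral identity $\Tr\big(N^{n+1}\big)=\lambda_+^{n+1}+\lambda_-^{n+1}$ (valid with multiplicity, hence needing no discriminant discussion) gives the claimed formula immediately, once one checks $\Tr(N)=T$ and $\det(N)=D$ — arithmetic you carried out correctly. This avoids the initial conditions, the explicit $M^{n-1}$ formula, and the final telescoping entirely; it also handles the degenerate or complex-eigenvalue case uniformly, whereas the paper separately verifies $T^2-4D>0$. The trade-off is negligible: the paper's route is more elementary in flavour but longer; yours requires only the standard facts that a cyclic contraction of a product of matrices is a trace and that the trace equals the sum of eigenvalues with multiplicity.
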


\begin{remark}
	Note that if $\sigma^2=0$, i.e., $A_{i}$ and $B_{i}$ are uncorrelated, this moment generating function simplifies to that of a Binomial random variable of parameters $(n,qs)$, as it should.
\end{remark}

Our proof will make use of the following result from \cite{williams1992n}.
 	
\begin{lemma}\label{lem:Williams}
		Let $M$ be a $2\times 2$ matrix with eigenvalues $\alpha$ and $\beta$. Then, if $\alpha \neq \beta$, for all $n\geq 1$,
		$$M^n=\alpha^n \left(\frac{M-\beta I}{\alpha-\beta}\right)+\beta^n \left( \frac{M-\alpha I}{\beta-\alpha} \right).$$
\end{lemma}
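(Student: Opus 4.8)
\textbf{Proof proposal for Lemma \ref{lem:Williams}.}

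The plan is to prove the identity $M^n = \alpha^n \left(\frac{M-\beta I}{\alpha-\beta}\right) + \beta^n \left(\frac{M-\alpha I}{\beta-\alpha}\right)$ directly, using the Cayley--Hamilton theorem together with a partial-fraction / Lagrange-interpolation argument. Since $M$ is a $2\times 2$ matrix with distinct eigenvalues $\alpha \neq \beta$, its characteristic polynomial is $\chi(x) = (x-\alpha)(x-\beta)$, and Cayley--Hamilton gives $\chi(M) = (M-\alpha I)(M-\beta I) = 0$. The key consequence I would extract is that $(M - \alpha I)(M-\beta I) = 0$, which lets me reduce any polynomial in $M$ to a linear combination of $I$ and $M$.

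First I would define the two matrices $E_\alpha := \frac{M - \beta I}{\alpha - \beta}$ and $E_\beta := \frac{M - \alpha I}{\beta - \alpha}$ (these are the spectral projectors). I would check the three defining relations: (a) $E_\alpha + E_\beta = I$, which is immediate since $\frac{(M-\beta I) - (M - \alpha I)}{\alpha - \beta} = \frac{\alpha - \beta}{\alpha-\beta}I = I$; (b) $E_\alpha E_\beta = E_\beta E_\alpha = 0$, which follows from $(M-\alpha I)(M-\beta I) = 0$ (Cayley--Hamilton) after dividing by the scalar $(\alpha-\beta)(\beta-\alpha) \neq 0$; and (c) $M E_\alpha = \alpha E_\alpha$ and $M E_\beta = \beta E_\beta$, which one verifies by writing $M(M - \beta I) = M^2 - \beta M$ and using $M^2 = (\alpha+\beta)M - \alpha\beta I$ (again Cayley--Hamilton) to get $M^2 - \beta M = \alpha M - \alpha\beta I = \alpha(M - \beta I)$; dividing by $\alpha - \beta$ gives $M E_\alpha = \alpha E_\alpha$, and symmetrically for $E_\beta$. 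From (a) and (b) one also gets $E_\alpha^2 = E_\alpha(I - E_\beta) = E_\alpha$ and similarly $E_\beta^2 = E_\beta$.

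With these relations in hand the proof is a one-line induction. Writing $M = M(E_\alpha + E_\beta) = \alpha E_\alpha + \beta E_\beta$, I would show by induction on $n \geq 1$ that $M^n = \alpha^n E_\alpha + \beta^n E_\beta$: the base case $n=1$ is the decomposition just written, and for the inductive step $M^{n+1} = M \cdot M^n = (\alpha E_\alpha + \beta E_\beta)(\alpha^n E_\alpha + \beta^n E_\beta) = \alpha^{n+1} E_\alpha^2 + \beta^{n+1} E_\beta^2 = \alpha^{n+1} E_\alpha + \beta^{n+1} E_\beta$, where the cross terms vanish by orthogonality (b) and the squares simplify by idempotency. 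Substituting back the definitions of $E_\alpha$ and $E_\beta$ yields exactly the claimed formula. There is no real obstacle here; the only point requiring a little care is making sure the division by $\alpha - \beta$ is legitimate, which is guaranteed by the hypothesis $\alpha \neq \beta$, and confirming that Cayley--Hamilton applies in the form $(M - \alpha I)(M - \beta I) = 0$ for a $2 \times 2$ matrix whose eigenvalues are $\alpha$ and $\beta$ (true regardless of diagonalizability, though here distinctness already forces diagonalizability).
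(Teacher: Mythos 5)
Your proof is correct, and every step checks out: Cayley--Hamilton gives $(M-\alpha I)(M-\beta I)=0$, from which the completeness, orthogonality, idempotency, and eigenvalue relations for $E_\alpha$ and $E_\beta$ all follow, and the induction is clean. Note, however, that the paper does not actually prove Lemma~\ref{lem:Williams}; it cites it directly from \cite{williams1992n} and uses it as a black box, so there is no in-paper argument to compare against. Your spectral-projector derivation is a standard and self-contained way to establish the identity; it also makes transparent why the hypothesis $\alpha\neq\beta$ is exactly what is needed (it ensures the scalar $\alpha-\beta$ is invertible, which is the only division performed).
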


\begin{proof}{Proof of Theorem \ref{th:sequence}.}
Let
\begin{align*}
\uoo_{n}&=\E[e^{t(A_1B_2+A_2B_3+\ldots+A_{n-2}B_{n-1}+A_{n-1}B_n)}],\\
\uoi_{n}&=\E[e^{t(A_1B_2+A_2B_3+\ldots+A_{n-2}B_{n-1}+A_{n-1}B_n+A_n)}],\\
\uio_{n}&=\E[e^{t(B_1+A_1B_2+A_2B_3+\ldots+A_{n-2}B_{n-1}+\ldots+A_{n-1}B_n)}],\\
\uii_{n}&=\E[e^{t(B_1+A_1B_2+A_2B_3+\ldots+A_{n-2}B_{n-1}+\ldots+A_{n-1}B_n+A_n)}].
\end{align*}
Note that by conditioning on $(A_0,B_0)$, we have that $\E[e^{tW}]=\poo \uoo_n+\poi \uoi_n+\pio \uio_n+\pii \uii_n$. We thus study these four sequences. By conditioning on $(A_1, B_1)$, it is easy to see that the following formulas hold true:
	\begin{align*}
	\uoo_n&=(\poo+\poi)\uoo_{n-1}+(\pio+\pii)\uio_{n-1}\\
	\uoi_n&=(\poo+\poi)\uoi_{n-1}+(\pio+\pii)\uii_{n-1}\\
	\uio_n &=(\poo+\poi e^t)\uoo_{n-1}+(\pio+\pii e^t)\uio_{n-1}\\
	\uii_n&=(\poo+\poi e^t)\uoi_{n-1}+(\pio+\pii e^t)\uii_{n-1}.
	\end{align*}
	Successive conditioning on $(A_{k},B_k)$ for $k\geq 2$ leads to these formulas holding for all $n\geq 2$. Hence, if we let 
	$$M\mathrel{\mathop{:}}=\begin{bmatrix} \poo+\poi ~ & ~\pio+\pii \\ \poo+\poi e^t~ & ~\pio+\pii e^t \end{bmatrix}$$
	then 
	$$\begin{bmatrix} \uoo_n \\ \uio_{n} \end{bmatrix} =M \cdot \begin{bmatrix} \uoo_{n-1} \\ \uio_{n-1} \end{bmatrix}~ \text{ and } ~\begin{bmatrix} \uoi_n \\ \uii_{n} \end{bmatrix} =M \cdot \begin{bmatrix} \uoi_{n-1} \\ \uii_{n-1} \end{bmatrix}.$$
We further have the following initial conditions
	\begin{align*}
	\uoo_1&=\E[e^{t(0\cdot B_n+A_n \cdot 0)}]=1\\
	\uoi_1&=\E[e^{t(0\cdot B_n+A_{n})}]=1+(\poi+\pii)(e^t-1)\\
	\uio_1&=\E[e^{t(B_{n}+A_n\cdot 0)}]=1+(\poi+\pii)(e^t-1)\\
	\uii_1&=\E[e^{-t(B_n+A_n)}]=1+2(\pii+\poi)(e^t-1)+\pii (e^{t}-1)^2
	\end{align*}
	Using these conditions, we get that 
	$$\begin{bmatrix} \uoo_n \\ \uio_{n} \end{bmatrix} =M^{n-1} \cdot \begin{bmatrix} \uoo_{1} \\ \uio_{1} \end{bmatrix}~ \text{ and } ~\begin{bmatrix} \uoi_n \\ \uii_{n} \end{bmatrix} =M^{n-1} \cdot \begin{bmatrix} \uoi_{1} \\ \uii_{1} \end{bmatrix}.$$
	The difficulty now is in computing $M^{n-1}$, which we will do via Lemma \ref{lem:Williams}. To this effect, we denote by $T$ the trace of $M$ and by $D$ its determinant. One can easily compute their values:
		$$T=\mbox{tr}(M)=\pii(e^t-1)+1 \text{ and } D=\det(M)=\sigma^2(e^t-1).$$ Recall that the eigenvalues of a $2 \times 2$ matrix satisfy the equation $x^2-Tx+D=0$. The discriminant here is $T^2-4D$, which is positive:
	\begin{align*}
	\Delta&=T^2-4D=\pii^2(e^{t}-1)^2+2(\pii-2\sigma^2)(e^t-1)+1=\left(\pii(e^t-1)+\frac{\pii-2\sigma^2}{\pii} \right)^2+\frac{4\sigma^2(\pii-\sigma)}{\pii^2}>0
	\end{align*}
	Hence solutions to this equation exist. We write:
	$$\alpha=\frac{T+\sqrt{T^2-4D}}{2} \text{ and } \beta=\frac{T-\sqrt{T^2-4D}}{2}.$$
	In accordance with Lemma \ref{lem:Williams}, we now work on computing $M-\beta I$ and $M-\alpha I$. We have:
	\begin{align*}
	M-\beta I&=R+\frac{\sqrt{\Delta}}{2}I,
	\end{align*}
	where $$R\mathrel{\mathop{:}}=\begin{bmatrix} \frac{\poo-\pii}{2} -\frac{\pii}{2}(e^t-1)~&~\pio+\pii \\ \poo +\poi e^t & -(\frac{\poo-\pii}{2})+\frac{\pii}{2}(e^t-1) \end{bmatrix}.$$ We denote by $r_{ij}$ its entries. Note that $r_{11}=-r_{22}$. Likewise, we have 
	\begin{align*}
	M-\alpha I &=R-\frac{\sqrt{\Delta}}{2}I.
	\end{align*}
	It follows that 
	$$M^n=\frac{\alpha^n}{\sqrt{\Delta}} (R+\frac{\sqrt{\Delta}}{2}I) -\frac{\beta^n}{\sqrt{\Delta}} (R-\frac{\sqrt{\Delta}}{2}I)=\left(\frac{\alpha^n-\beta^n}{\sqrt{\Delta}}\right)R+\frac{\alpha^n+\beta^n}{2} \cdot I$$
	and from this, for any $n\geq 1$,
	\begin{align*}
	\uoo_n&=\left(\frac{\alpha^{n-1}-\beta^{n-1}}{\sqrt{\Delta}}\right) (r_{11}\uoo_1+r_{12}\uio_1)+\frac{\alpha^{n-1}+\beta^{n-1}}{2}\uoo_1\\
	\uio_n&=\left(\frac{\alpha^{n-1}-\beta^{n-1}}{\sqrt{\Delta}}\right) (r_{21}\uoo_1+r_{22}\uio_1)+\frac{\alpha^{n-1}+\beta^{n-1}}{2}\uio_1\\
	\uoi_n&=\left(\frac{\alpha^{n-1}-\beta^{n-1}}{\sqrt{\Delta}}\right)(r_{11}\uoi_1+r_{12}\uii_1)+\frac{\alpha^{n-1}+\beta^{n-1}}{2}\uoi_1\\
	\uii_n&=\left(\frac{\alpha^{n-1}-\beta^{n-1}}{\sqrt{\Delta}}\right)(r_{21}\uoi_1+r_{22}\uii_1)+\frac{\alpha^{n-1}+\beta^{n-1}}{2}\uii_1.
	\end{align*}
	This can now be used to compute $\E[e^{tW}]$. We have
	\begin{align*}
	\E[e^{tW}]&=\poo \uoo_{n}+\poi \uoi_{n}+\pio \uio_{n}+\pii \uii_{n}\\
	&=\left( r_{11}(\poo \uoo_1+\poi \uoi_1-\pio \uio_1 -\pii \uii_1) +r_{12}(\poo \uio_1+\poi \uii_1)+r_{21}(\pio \uoo_1 +\pii \uoi_1)\right)\\
	&\cdot \left(\frac{\alpha^{n-1}-\beta^{n-1}}{\sqrt{\Delta}}\right)+(\poo \uoo_1+\pio \uio_1+\poi \uoi_1+\pii \uii_1)\cdot \left(\frac{\alpha^{n-1}+\beta^{n-1}}{2}\right). 
	\end{align*}
	We compute each part separately. We have:
	\begin{align*}
	\poo \uoo_1+\pio \uio_1+\poi \uoi_1+\pii \uii_1 &=T^2-2D\\
	r_{11}(\poo \uoo_1+\poi \uoi_1-\pio \uio_1 -\pii \uii_1)&=\frac{1}{2}\left(\frac{\poo-\pii}{2} -\frac{\pii}{2}(e^t-1)\right)^2 \cdot T\\
	r_{12}(\poo \uio_1+\poi \uii_1)+r_{21}(\pio \uoo_1 +\pii \uoi_1) &=2T(\poi+\pii)(1-\pii-\poi+\poi(e^t-1))
	\end{align*}
	We combine the results above to obtain:
	\begin{align*}
	&r_{11}(\poo \uoo_1+\poi \uoi_1-\pio \uio_1 -\pii \uii_1) +r_{12}(\poo \uio_1+\poi \uii_1)+r_{21}(\pio \uoo_1 +\pii \uoi_1)=\frac{T^3}{2}-2DT
	\end{align*}
	which leads us to finally having
	\begin{align*}
	\E[e^{tW}]=\left(\frac{\alpha^{n-1}-\beta^{n-1}}{\sqrt{\Delta}}\right) \cdot \left( \frac{T^3}{2}-2DT\right)+\left(\frac{\alpha^{n-1}+\beta^{n-1}}{2}\right) \cdot (T^2-2D).
	\end{align*}
	As $\frac{T^3}{2}-2DT=\frac{T}{2} \cdot (T^2-4D)=\frac{T}{2} \cdot \Delta$, we can further simplify
	\begin{align*}
	\E[e^{tW}]&=\left(\alpha^{n-1}-\beta^{n-1}\right) \cdot \frac{T}{2}\cdot \sqrt{\Delta}+\left(\frac{\alpha^{n-1}+\beta^{n-1}}{2}\right) \cdot (T^2-2D)=\alpha^{n+1}+\beta^{n+1}
	\end{align*}
	\end{proof}

\section{Proof of Lemma \ref{lem:cullina}} \label{sec:cullina}

As mentioned previously, the proof of Lemma \ref{lem:cullina} is contained in \cite{cullina2018partial}. We repeat it here for completeness.

\begin{proof}{Proof of Lemma \ref{lem:cullina}.}
	The optimal choice of $z$ is obtained by differentiating $z \mapsto z^{-\tau} e^{q_2(z^2-1)+q_1(z-1)}$ and setting the derivative equal to 0, i.e., 
	$$(2q_2z+q_1)z^{-\tau} e^{q_2(z^2-1)+q_1(z-1)}-\tau z^{-\tau-1}e^{q_2(z^2-1)+q_1(z-1)}=0,$$
	which is equivalent to the quadratic equation
		\begin{align}\label{eq:quad.eq}
		2q_2z^2+q_1z-\tau=0.
		\end{align}
		This equation has one positive root and one negative root: the positive root is $z^*$. From this quadratic equation, we can obtain the expression $z^{*2}=\frac{\tau-q_1z^*}{2q_2}$. As $q_1>0$ and $z^*>0$, it follows that 
		$$z^{*2}\leq \frac{\tau}{2q_2},$$ which corresponds to (\ref{eq:ub.z*}). One can also give an explicit expression of $z^*$ by solving the quadratic equation:
		$$z^*=\frac{-q_1+\sqrt{q_1^2+8\tau q_2}}{4q_2}=\frac{2\tau}{q_1+\sqrt{q_1^2+8\tau q_2}}.$$
		Because $q_1 \leq \sqrt{q_1^2+8\tau q_2}$, we have the bounds 
		\begin{align} \label{eq:bounds.z*}
		\frac{\tau}{\sqrt{q_1^2+8\tau q_2}} \leq z^* \leq \frac{\tau}{q_1}.
		\end{align}
	Starting with one of the factors from the left side of (\ref{eq:cullina.1}), we have
	$$e^{q_2(z^{*2}-1)+q_1(z^*-1)} \leq e^{\frac{q_1}{2}z^*+\frac{\tau}{2}-q_2-q_1}\leq e^{\tau-q_1-q_2}\leq e^{\tau},$$
	where we used (\ref{eq:quad.eq}) to eliminate the $q_2z^{*2}$ term, applied the upper bound from (\ref{eq:bounds.z*}), and used $q_1, q_2 \geq 0$. From the lower bound in (\ref{eq:bounds.z*}), 
	$${z}^{*-2} \leq \frac{q_1^2}{\tau^2}+\frac{8q_2}{\tau} \leq \max \left\{\frac{2q_1^2}{\tau^2}, \frac{16q_2}{\tau}\right\}$$
	so $z^{*-\tau} e^{q_2(z^{*2}-1)+q_1(z^*-1)} \leq \zeta^\tau.$
\end{proof}

%
%




\bibliographystyle{abbrv}
\bibliography{graph_alignment1} 



\end{document}